\documentclass{article}

\usepackage{microtype}
\usepackage{graphicx}
\usepackage{subcaption}
\usepackage{booktabs} 
\usepackage{multirow}
\usepackage{adjustbox}
\usepackage{booktabs}
\usepackage{multirow}
\usepackage{adjustbox}
\usepackage[table]{xcolor}   
\usepackage{caption}
\newcommand{\best}[1]{\textbf{#1}}
\newcommand{\second}[1]{\underline{#1}}

\definecolor{oursgray}{gray}{0.93}  

\providecommand{\best}[1]{\textbf{#1}}
\providecommand{\second}[1]{\underline{#1}}
\definecolor{oursgray}{gray}{0.93}

\providecommand{\impup}[1]{\textcolor{green!60!black}{\raisebox{0.15ex}{\fontsize{4}{4}\selectfont$\uparrow$#1}}}
\providecommand{\impdown}[1]{\textcolor{green!60!black}{\raisebox{0.15ex}{\fontsize{4}{4}\selectfont$\downarrow$#1}}}

\providecommand{\worsedown}[1]{\textcolor{red!70!black}{\raisebox{0.15ex}{\fontsize{4}{4}\selectfont$\downarrow$#1}}}

\usepackage{hyperref}

\usepackage[accepted]{icml2026}
\usepackage{algorithm}
\usepackage{algorithmic}

\usepackage{amsmath}
\usepackage{amssymb}
\usepackage{mathtools}
\usepackage{amsthm}

\usepackage{natbib}

\usepackage[capitalize,noabbrev]{cleveref}

\theoremstyle{plain}
\newtheorem{theorem}{Theorem}[section]
\newtheorem{proposition}[theorem]{Proposition}

\theoremstyle{definition}

\newtheorem{assumption}[theorem]{Assumption}
\theoremstyle{remark}

\usepackage[textsize=tiny]{todonotes}

\icmltitlerunning{Steer Where It Matters: Token-Level Visual-Sensitivity Steering for LVLMs Hallucination Mitigation}

\begin{document}

\twocolumn[
  \icmltitle{Steer Where It Matters: Token-Level Visual-Sensitivity Steering for LVLMs Hallucination Mitigation}


  \icmlsetsymbol{equal}{*}

    \begin{icmlauthorlist}
      \icmlauthor{Ruipeng Zhang}{scut,erc}
      \icmlauthor{Zhihao Li}{scut,pazhou,erc}
      \icmlauthor{C.~L.~Philip Chen}{scut,pazhou,erc}
      \icmlauthor{Tong Zhang}{scut,pazhou,erc}
    \end{icmlauthorlist}
    
    \icmlaffiliation{scut}{Guangdong Provincial Key Laboratory of Computational AI Models and Cognitive Intelligence, School of Computer Science \& Engineering, South China University of Technology, Guangzhou, China}
    \icmlaffiliation{pazhou}{Pazhou Lab, Guangzhou, China}
    \icmlaffiliation{erc}{Engineering Research Center of the Ministry of Education on Health Intelligent Perception and Paralleled Digital-Human, Guangzhou, China}
    
    \icmlcorrespondingauthor{Tong Zhang}{tony@scut.edu.cn}
    
    \icmlkeywords{Multimodal LLMs, Hallucination, Activation Steering, Inference-Time Control}
    
    \vskip 0.3in

]

\printAffiliationsAndNotice{}  

\begin{abstract}
Large vision language models (LVLMs) have made rapid advancements and are deployed across various applications, yet hallucinations remain a major challenge. Activation steering is appealing due to its minimal training overhead and controllability at inference time. However, we found that during autoregressive decoding, visual conditioning affects token prediction sparsely and locally across decoding steps, and many existing methods that average image-versus-no-image differences over the entire sequence dilute these critical signals, yielding low signal-to-noise ratio steering directions. Additionally, many existing methods apply a fixed steering strength, which misallocates the intervention budget, over-perturbs non-critical tokens, and can cause instability. To address these limitations, we propose \textbf{Token-Level Visual-Sensitivity Steering} (TLVS) for hallucination mitigation. Our approach first extracts token-level steering vectors and refines them, and then applies fine-grained, visual-sensitivity–adaptive steering only where it matters. This lightweight, plug-and-play mechanism requires only minimal training for calibration and can be applied across diverse vision-language models. It modulates the steering strength at each decoding step, selectively suppressing hallucination-prone spans while preserving evidence-grounded content. We evaluate TLVS on several benchmarks, including POPE, AMBER, CHAIR (COCO), MMHal and HallusionBench, demonstrating consistent improvements over previous steering methods.  
\end{abstract}

\section{Introduction}
Large vision language models (LVLMs) have shown impressive performance in vision-language understanding and generation~\citep{li2022blip,liu2023visual,lu2024deepseek,liu2024improved,bai2025qwen2}, enabling applications such as assistants, embodied agents, and content creation in healthcare~\citep{huang2023voxposer,zitkovich2023rt,qian2025cotbox}. Nevertheless, hallucinations remain a major bottleneck for LVLMs, often attributed to an over-reliance on language priors, potentially stemming from architectural and training biases~\citep{leng2024mitigating,zhu2025ibd,fang2025grounding}.
\begin{figure}[t]
  \centering
  \vspace{-2mm}
  \includegraphics[width=\linewidth]{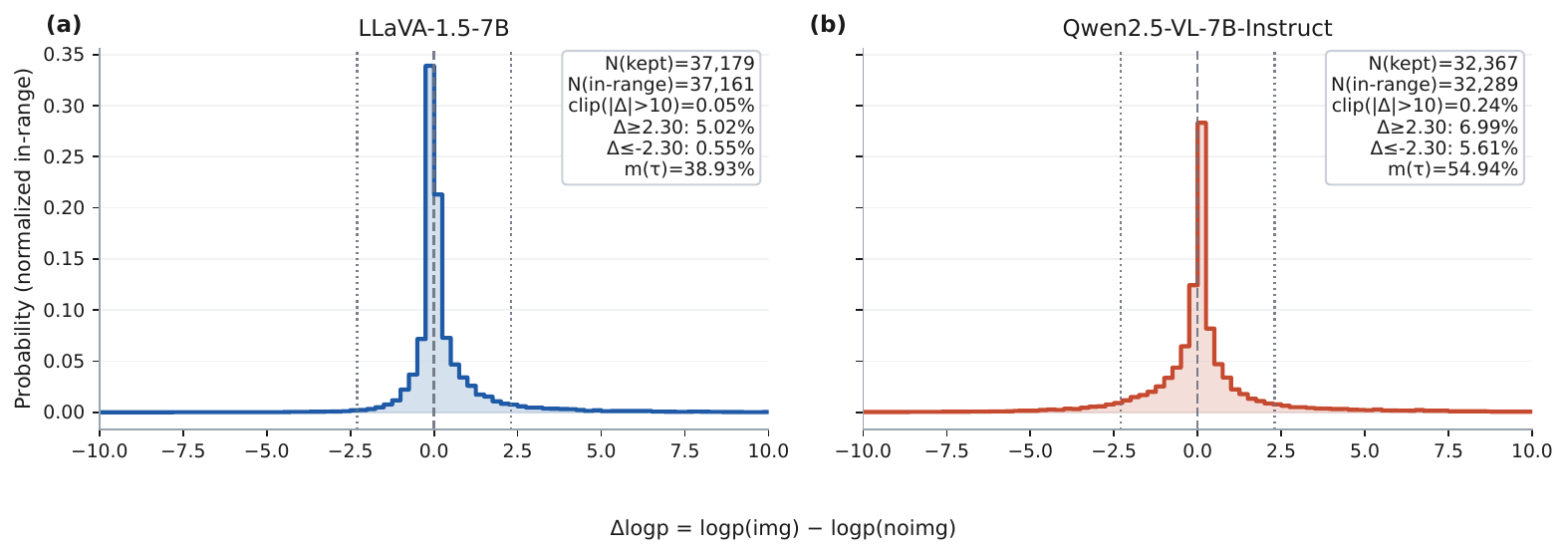}
  \vspace{-2mm}
  \caption{
  Distribution of $\Delta\log p$ in Eq.~\eqref{eq:delta_logp_tf} under teacher forcing for LLaVA-1.5-7B and Qwen2.5-VL-7B-Instruct.
  Most tokens cluster near $\Delta\log p\approx 0$, while a small heavy-tailed fraction accounts for a disproportionate share of the total visual gain.
  }
  \vspace{-2mm}
  \label{fig:delta_distribution}
\end{figure}
Existing methods broadly include training-time alignment and inference-time control.
Training-time approaches~\citep{yu2024rlhf,xie2024v,wang2024mdpo} can effectively mitigate hallucinations by reducing over-reliance on language priors. They are often computationally expensive and time-consuming to train. Meanwhile, a growing body of work explores inference-time interventions, spanning contrastive decoding and attention modulation~\citep{leng2024mitigating,liu2024paying,jung2025visual,tang2025seeing,huo2024self,yangunderstanding}. Among them, activation steering~\citep{li2025hidden,liu2025reducing} has recently gained attention as it requires little to no training and can be readily plugged into existing LVLMs by injecting a learned direction into the model’s hidden states. The dominant paradigm constructs two conditions (image-conditioned vs.\ image-ablated), extracts a ``more-visual'' direction by aggregating their activation differences, and applies the direction globally with a fixed steering strength during generation. However, this \textbf{token-agnostic} and \textbf{step-invariant} formulation implicitly assumes that visual evidence contributes uniformly across decoding steps and requires the same steering strength at every step. This raises a natural question: when and where does visual grounding truly matter during generation, and how should steering budget be allocated accordingly? Our analysis provides two observations that motivate a more fine-grained alternative.

Firstly, as Figure~\ref{fig:delta_distribution} shows, the token-level visual gain $\Delta\log p$ has a sharp peak near zero with heavy tails. This indicates that the output logits change little for most decoding steps, while a small fraction of tail tokens accounts for a substantial share of the overall visual influence and undergoes large changes. This suggests that global averaging can dilute the critical visual signals. Secondly, as Figure~\ref{fig:intro_schematic} illustrates, global steering applies a uniform intervention across all steps, even though visual reliance is highly sparse across time, as captured by spikes in the token-level visual sensitivity signal $\mathrm{VS}_t$, defined in Eq.~\eqref{eq:vs_kl}. As a result, it inevitably perturbs many non-critical tokens and misallocates the intervention budget. We formalize these two problems in Sec.~\ref{sec:problem_discussion}.
\begin{figure}[t]
  \centering
  \vspace{-2mm}
  \includegraphics[width=\linewidth,trim=10 10 10 8,clip]{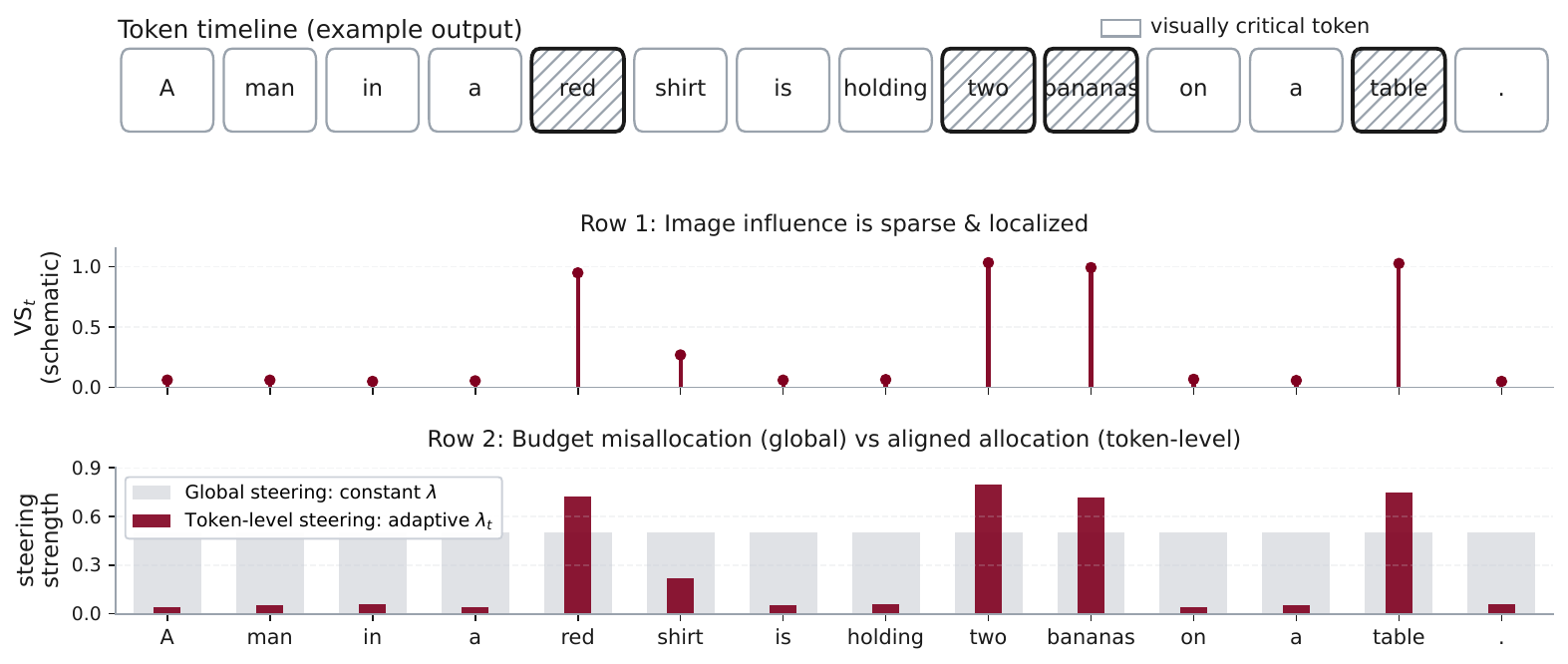}
  \vspace{-2mm}
  \caption{
  Visual evidence matters at only a few steps (spikes in $\mathrm{VS}_t$), while global steering applies a constant $\lambda$ everywhere, over-perturbing image-insensitive tokens.
  Token-level steering adapts $\lambda_t$ to $\mathrm{VS}_t$, focusing intervention on visually critical steps.
  }
  \vspace{-2mm}
  \label{fig:intro_schematic}
\end{figure}

Motivated by this diagnosis, we propose Token-Level Visual-Sensitivity Steering (TLVS) that steers only where it matters. Concretely, (i) we extract per-layer steering directions by contrasting token representations at visually sensitive vs.\ insensitive steps; (ii) we optionally refine these directions using human-corrected responses with a lightweight objective that denoises them while regularizing distribution drift; and (iii) we perform token-level adaptive steering that modulates the steering strength step-by-step using an online visual-sensitivity signal, with smoothing and confidence-based caps for stability. We evaluate TLVS on two widely used LVLMs (LLaVA-1.5-7B and Qwen2.5-VL-7B) across standard hallucination benchmarks (POPE, AMBER, MMHal-Bench, HallusionBench, and CHAIR (COCO)). TLVS consistently reduces hallucinations while improving stability and largely preserving overall generation quality. Our contributions are as follows.

\begin{itemize}
  \item We provide an empirical diagnosis: image conditioning influences token prediction sparsely with a heavy-tailed distribution, while step-invariant global steering unnecessarily perturbs many non-critical tokens.
  \item We propose \textbf{Token-Level Visual-Sensitivity Steering (TLVS)}, which first extracts and refines token-level steering vectors, and then applies fine-grained, visual-sensitivity-adaptive steering only where it matters.
  \item We demonstrate consistent reductions in hallucinations with minimal collateral disturbance on LLaVA-1.5-7B and Qwen2.5-VL-7B-Instruct across POPE, AMBER, MMHal-Bench, HallusionBench, and CHAIR (COCO).
\end{itemize}

\section{Related Work}
Hallucination in large vision language models (LVLMs) refers to generating content that is unsupported by the visual input, often driven by strong language priors and imperfect cross-modal grounding that can produce fluent but visually unfaithful outputs~\citep{zhou2023analyzing,li2023evaluating,guan2024hallusionbench,cao2024visdiahalbench}.

\textbf{Hallucination mitigation in LVLMs.}
Existing methods broadly fall into training-time alignment and inference-time control.
Training-time approaches improve grounding through instruction tuning~\citep{you2023ferret,yuan2024osprey,garg2024imageinwords} and preference optimization with curated or corrective supervision.
Representative DPO-based variants collect image-conditioned preference pairs and directly optimize the model to favor visually grounded outputs without training an explicit reward model~\citep{rafailov2023direct,yu2024rlhf,wang2024mdpo,xing-etal-2025-align}. Inference-time methods are attractive for their plug-and-play deployment.
A major line reshapes decoding with contrastive/calibrated schemes that subtract or reweight logits using perturbed or auxiliary branches, suppressing language-prior continuations and promoting image-grounded tokens~\citep{huo2024self,leng2024mitigating,chen2025decoupling,zhang2025active}. Others calibrate attention or confidence to downweight~\citep{zhang2025mllms,wu2024controlmllm} overconfident but unsupported tokens, often with stepwise heuristics that adapt intervention strength during generation~\citep{jung2025visual}.
Complementary approaches perform post-hoc verification and revision~\citep{zhou2023analyzing,yin2024woodpecker,lee2024volcano,lee2024toward}, using the model itself or an auxiliary verifier to identify and correct ungrounded spans at the cost of extra computation.

\textbf{Activation steering.}
Activation steering manipulates intermediate representations at inference time to control model behavior.
In language-only settings~\citep{turner2024activation,arditi2024refusal,rimsky2024steering}, steering directions are typically derived from contrastive prompts and injected into hidden states to induce global behaviors (e.g., sentiment, safety, honesty), and some of them further studie how such directions can be identified and composed~\citep{stolfo2024improving,vu2025angular}.
Recent multimodal work has begun to explore steering-style interventions for hallucination mitigation in LVLMs~\citep{li2025hidden,liu2025reducing,wu2025sharp,su2025activation}.
Most methods aggregate image–no-image activation differences over the whole sequence, diluting sparse visual signals and producing low-SNR directions. Their fixed steering strength further over-perturbs non-critical tokens, causing instability.
In contrast, we formulate LVLMs' hallucination mitigation as a token-level control problem: TLVS first extracts and refines token-level steering vectors, and then applies fine-grained, visual-sensitivity--adaptive steering only where it matters.

\section{Problem Formulation and Diagnosis}
\label{sec:problem_discussion}
Activation steering is a plug-and-play, inference-time control method for hallucination mitigation. A common paradigm contrasts image-conditioned with image-ablated inputs~\citep{li2025hidden,liu2025reducing}, pools their activation differences to obtain a ``more-visual'' direction, and applies it uniformly during decoding.

Let $h^{I}_{\ell,t}(x)$ and $h^{\varnothing}_{\ell,t}(x)$ denote layer-$\ell$ hidden states at step $t$ under the image-conditioned and image-ablated inputs.
A standard choice estimates a layer-wise direction by token-averaging the condition difference:
\begin{equation}
v^{(\ell)}
=
\mathbb{E}_{x \sim \mathcal{D}}
\left[
\frac{1}{T}
\sum_{t=1}^{T}
\big(
h^{I}_{\ell,t}(x)
-
h^{\varnothing}_{\ell,t}(x)
\big)
\right],
\qquad \ell \in \mathcal{S},
\label{eq:layerwise_aggregate_direction}
\end{equation}
At inference time, a common baseline applies step-invariant injection:
\begin{equation}
h'_{\ell,t} = h_{\ell,t} + \lambda\, v^{(\ell)},
\qquad \ell \in \mathcal{S},
\label{eq:global_steer}
\end{equation}
where $\lambda$ is the steering strength.

However, this raises two concerns: (i) the effect of visual conditioning on token prediction may be sparse and localized, making coarse extraction insufficiently targeted; and (ii) globally applying a fixed intervention may perturb many tokens that do not require additional visual grounding, leading to systematic side effects and decoding instability. These considerations motivate a closer look at token-level selection and adaptive modulation of the intervention budget across decoding steps.

\subsection{Observation 1: Image conditioning affects token prediction sparsely}
\label{subsec:obs1_sparsity}
\begin{figure}[t]
  \centering
  \vspace{-2mm}
  \includegraphics[width=\linewidth,trim=10 10 10 8,clip]{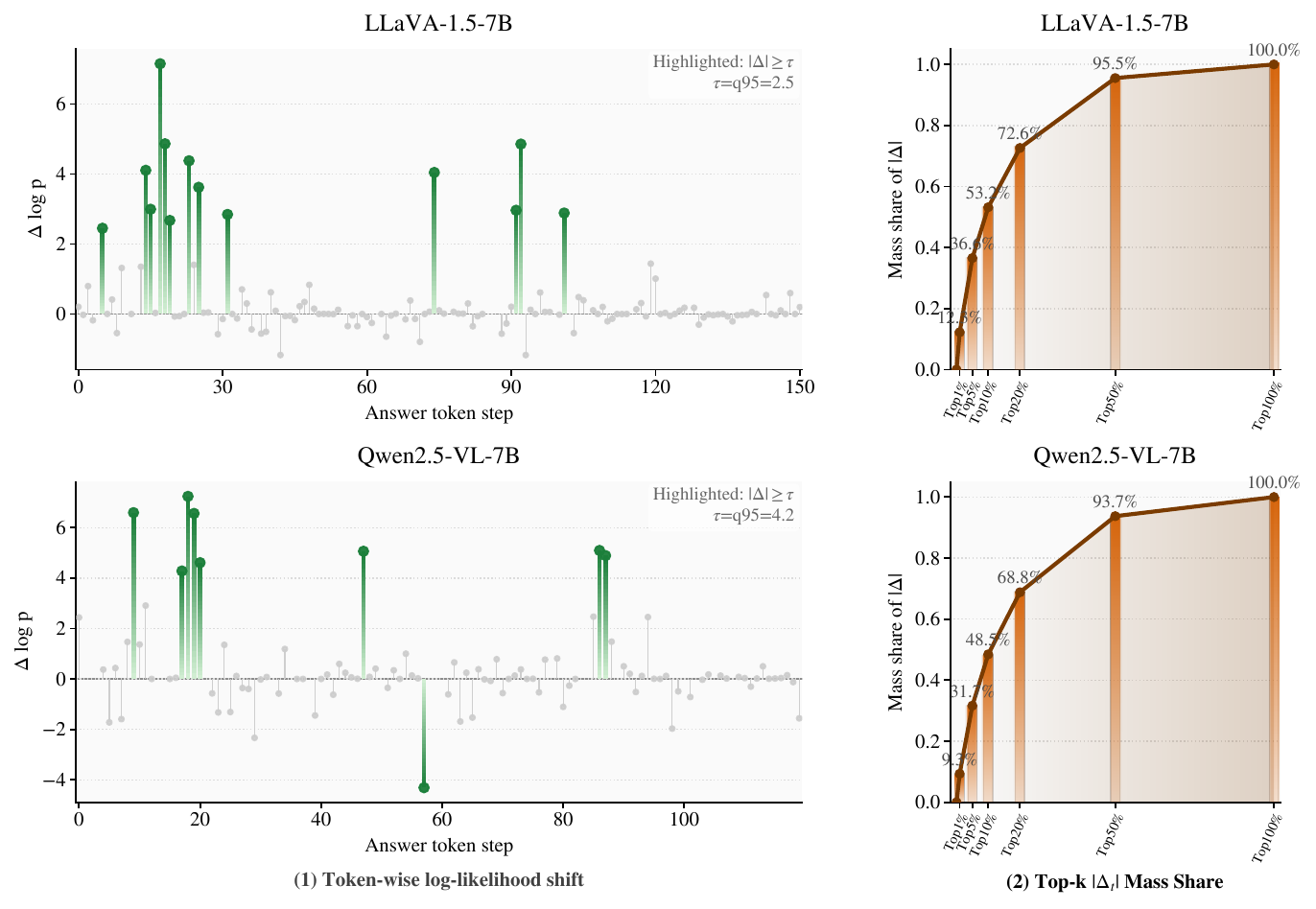}
  \vspace{-2mm}
  \caption{
  Left: per-token $\Delta_t$ on one example, where large-magnitude tokens occur sparsely.
  Right: mass concentration of $|\Delta_t|$, showing that a small top fraction dominates the total.
  }
  \vspace{-2mm}
  \label{fig:sparsity_main}
\end{figure}
To quantify how much the image condition changes token prediction, we compute token-level differences using teacher forcing so that the two conditions are compared under an identical target sequence (see Appendix~\ref{app:diagnosis3.1}).

Concretely, for each example we first fix an answer token sequence $y_{1:T}$ by taking the model-generated response under the image-conditioned setting. We then compute the step-wise token log-probabilities under the two conditions with an aligned prefix, and take their difference:
\begin{equation}
\Delta_t
=
\log p(y_t \mid x, I, y_{<t})
-
\log p(y_t \mid x, \varnothing, y_{<t}),
\label{eq:delta_logp_tf}
\end{equation}
where $I$ denotes the image input and $\varnothing$ the image-ablated condition. Importantly, Eq.~\eqref{eq:delta_logp_tf} is computed with the same prefix $y_{<t}$, ensuring that $\Delta_t$ isolates the effect of visual conditioning.

Figure~\ref{fig:sparsity_main} illustrates representative lollipop plots of $\Delta_t$ along decoding steps. We observe a clear sequence-level sparsity pattern: most tokens have $\Delta_t \approx 0$, while only a small subset exhibits large deviations, typically corresponding to visually grounded. This indicates that visual information does not uniformly influence the entire generation; rather, it intervenes at a small subset of steps.

To quantify this pattern at the dataset level, we aggregate all answer tokens across the dataset, sort them by $|\Delta_t|$ in descending order, and measure how much cumulative $|\Delta_t|$-mass is captured by the top fraction of tokens:
\begin{equation}
S(p)
=
\frac{\sum_{t \in \mathrm{Top}(p)} |\Delta_t|}
     {\sum_{t} |\Delta_t|},
\label{eq:topk_mass}
\end{equation}
where $\mathrm{Top}(p)$ denotes the set of the top $p$ fraction of tokens ranked by $|\Delta_t|$.
As shown in Figure~\ref{fig:sparsity_main}, $|\Delta_t|$ is highly concentrated: a small top fraction of tokens already explains a disproportionately large share of the total $|\Delta_t|$-mass, and the cumulative curve rises steeply from 0\% to 100\%.
In particular, the bars at $p\in\{1,5,10,20,50,100\}\%$ show that the $|\Delta_t|$-mass accumulates rapidly within the top-ranked tokens, indicating a strongly top-heavy pattern.
Figure~\ref{fig:sparsity_hex} further shows that, across the dataset, most tokens cluster near $\Delta_t\approx 0$, with only sparse large-magnitude deviations spread over relative answer positions.
Together, these results indicate that image conditioning is sparse across decoding steps and dominated by a small set of high-magnitude tokens.

\begin{figure}[t]
  \centering
  \includegraphics[width=\linewidth]{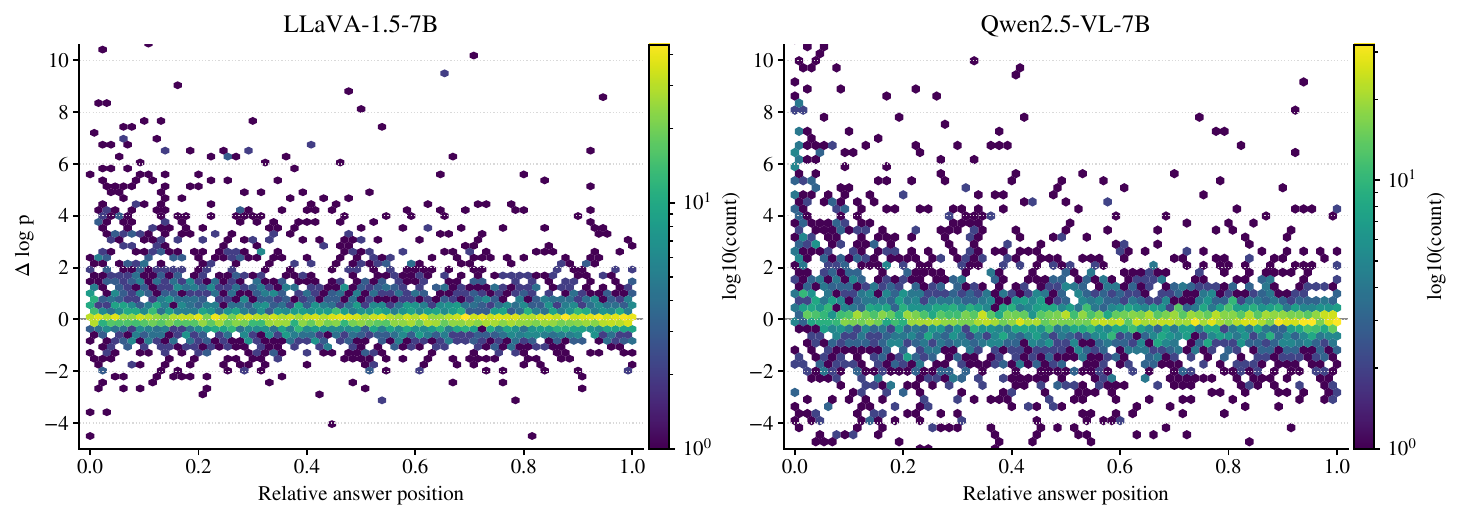}
  \caption{
  Hexbin density plots show $\Delta_t$ for individual tokens as a function of their relative answer position; most tokens cluster near zero while a small fraction exhibits large-magnitude deviations. Color indicates log-scaled counts.
  }
  \label{fig:sparsity_hex}
\end{figure}
This observation has an important implication for steering signal extraction.
When the image-induced effect concentrates on a small subset of steps,
uniformly averaging activations over all tokens and then taking a condition difference
can wash out these critical contributions, yielding a low-SNR direction.
We formalize this dilution effect below.

Let $\delta_{\ell,t}(x)=h^{I}_{\ell,t}(x)-h^{\varnothing}_{\ell,t}(x)$ be the layer-$\ell$ activation difference at step $t$,
and let $\mathcal{C}_\tau=\{t:\,|\Delta_t|\ge \tau\}$ denote visually critical steps with sparsity rate $q=|\mathcal{C}_\tau|/T$.
Under a simple sparse-effect model in which only $t\in\mathcal{C}_\tau$ carries a consistent mean shift $v_\star^{(\ell)}$ (and non-critical steps are zero-mean),
the step-averaged estimator used by Eq.~\eqref{eq:layerwise_aggregate_direction} satisfies
\begin{equation}
\mathbb{E}\Big[\frac{1}{T}\sum_{t=1}^{T}\delta_{\ell,t}(x)\Big] = q\, v^{(\ell)}_\star.
\end{equation}
Moreover, if one rescales this average to be unbiased, $\tilde v_{\text{avg}}^{(\ell)}=\frac{1}{qT}\sum_{t=1}^{T}\delta_{\ell,t}(x)$,
its estimation error is worse by a factor on the order of $1/q$ compared to selecting/averaging only over critical steps, yielding a low-SNR direction
(see Appendix~\ref{app:obs1_theory}).
Finally, without token alignment, free-running generations can introduce additional path-divergence noise into $\delta_{\ell,t}(x)$, further reducing the effective SNR.

\subsection{Observation 2: Step-invariant global steering over-perturbs non-critical tokens}
\label{subsec:obs2_global_apply}
Many activation-steering baselines are step-invariant, applying the same direction with a constant strength at every decoding step (Eq.~\eqref{eq:global_steer}).
To examine the limitation of such step-invariant application, we run a teacher-forced, token-aligned diagnosis on a fixed response $y_{1:T}$ and compare how different intervention schedules affect the likelihood of the same reference tokens.
Implementation details are deferred to Appendix~\ref{app:diagnosis3.2}.
Specifically, for each token step $t$, we measure the per-token change in negative log-likelihood (NLL) relative to vanilla decoding:
\begin{equation}
\Delta \mathrm{NLL}_t(\text{method} - \text{vanilla})
=
\mathrm{NLL}_t^{\text{method}} - \mathrm{NLL}_t^{\text{vanilla}},
\label{eq:delta_nll}
\end{equation}
where $\Delta \mathrm{NLL}_t>0$ suppresses the reference token $y_t$ (lower probability) and $\Delta \mathrm{NLL}_t<0$ promotes it.
Figure~\ref{fig:global_vs_local_deltanll} contrasts two methods using the same steering direction:
(i) global step-invariant steering applied at every token step, and
(ii) an oracle token-aware schedule that uses AMBER annotations to assign a larger strength to visually critical steps (e.g., hallucination and object-related tokens) and a smaller strength otherwise.
The top panel shows that global steering induces substantial $|\Delta \mathrm{NLL}_t|$ at many non-critical steps (i.e., non-hallucinated, correct/background tokens), and can also perturb evidence-grounded spans.
In contrast, the oracle schedule concentrates $\Delta \mathrm{NLL}_t$ on visually critical tokens and suppresses background disturbances, suggesting that step-invariant steering mainly suffers from misallocated intervention budget.
This motivates token-level adaptive modulation of steering strength per step to focus on visually critical (hallucination-prone) tokens.
\begin{figure}[t]
  \centering
  \vspace{-2mm}
  \includegraphics[width=\linewidth]{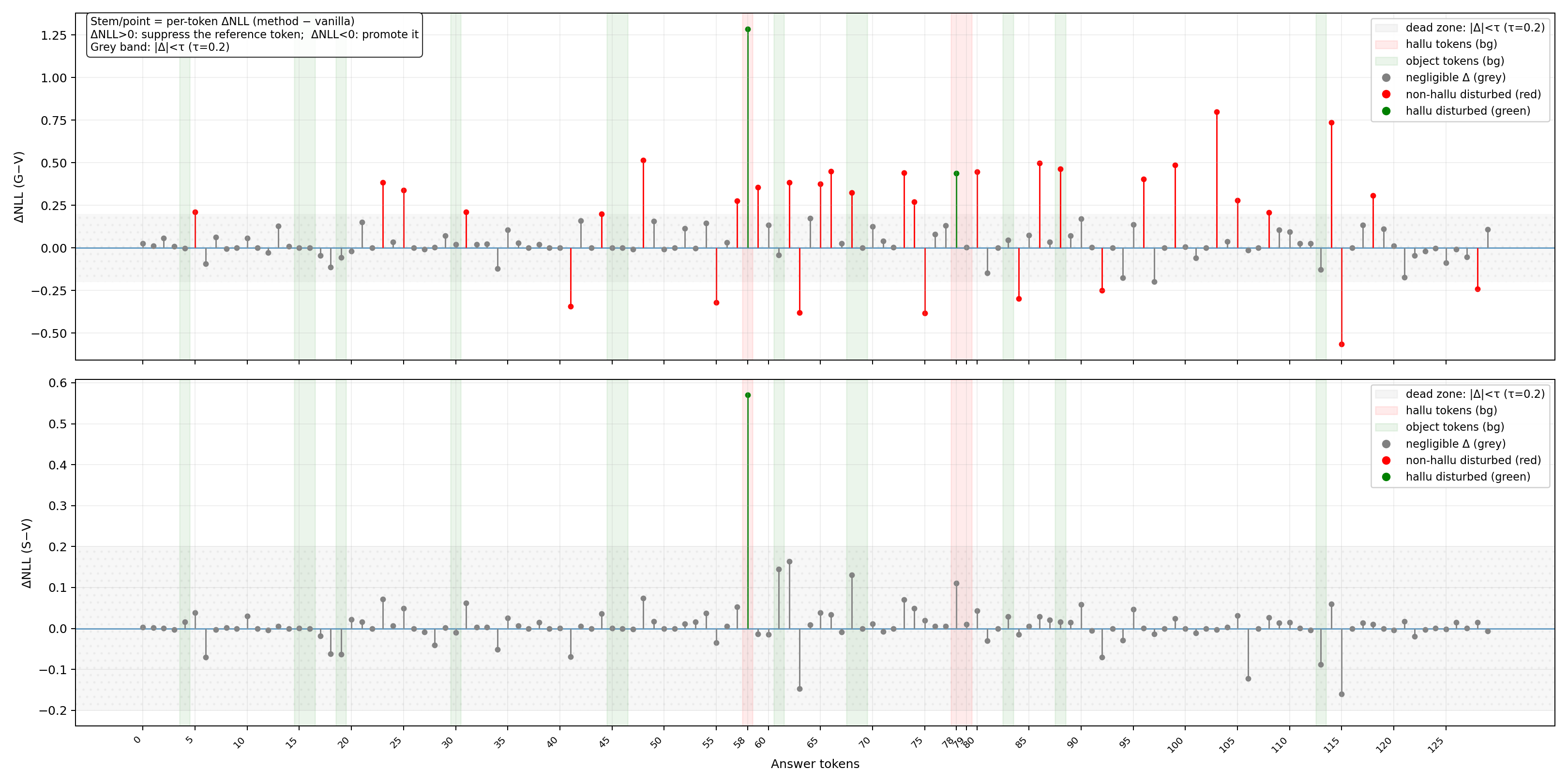}
  \vspace{-2mm}
  \caption{
    Teacher-forced, token-aligned $\Delta \mathrm{NLL}_t$ for global vs.\ oracle localized steering (Eq.~\ref{eq:delta_nll}).
    Grey band: $|\Delta_t|<\tau$; highlights: AMBER hallucinated (red) and object-grounded (green) spans.
    Global steering perturbs many background tokens, while localization focuses on critical tokens.
  }
  \vspace{-2mm}
  \label{fig:global_vs_local_deltanll}
\end{figure}


\section{Method}
\label{sec:method}

We consider an LVLM under two conditions: an image-conditioned input $(x,I)$ and an image-ablated input $(x,\varnothing)$, where the image is removed. The model defines the next-token distribution $p_\theta(\cdot \mid x, I, y_{<t})$ (and $p_\theta(\cdot \mid x, \varnothing, y_{<t})$ analogously).

We intervene on a set of transformer layers $\mathcal{S}$ by adding a steering vector to the hidden states:
\begin{equation}
\label{eq:steer-inject}
h^{(\ell)}_t \leftarrow h^{(\ell)}_t + \lambda_t \, W^{(\ell)}, \qquad \forall \ell \in \mathcal{S},
\end{equation}
where $W^{(\ell)} \in \mathbb{R}^d$ is the steering direction for layer $\ell$ and $\lambda_t \ge 0$ is a (possibly time-varying) steering strength.

With this setup, our method has three components: (1) token-level extraction of per-layer steering vectors $W_{\text{init}}^{(\ell)}$, (2) optional supervised refinement of $W^{(\ell)}$, and (3) KL-controlled adaptive steering at inference time, which adapts the steering strength $\lambda_t$ token by token.
Figure~\ref{fig:method_overview} provides an overview.

\begin{figure*}[t]
  \centering
  \vspace{-2mm}
  \includegraphics[width=\linewidth]{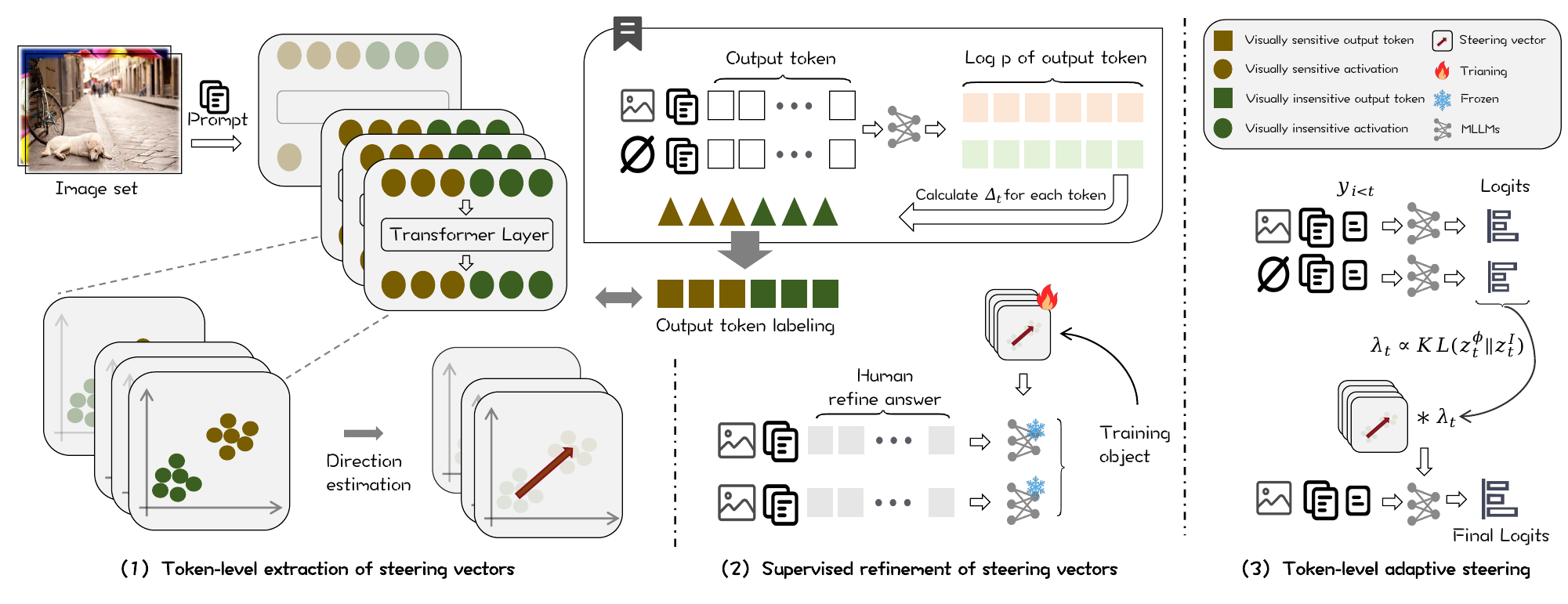}
  \vspace{-2mm}
  \caption{
  Overview of our token-level steering framework.
  Step 1 extracts per-layer steering directions from token-wise image sensitivity ($\Delta_t$);
  Step 2 optionally refines $W^{(\ell)}$ with supervised correction data;
  Step~3 adaptively updates $\lambda_t$ at each step using the KL divergence between image-conditioned and image-ablated logits.
  }
  \vspace{-2mm}
  \label{fig:method_overview}
\end{figure*}

\subsection{Step 1: Token-level extraction of steering vectors}
\label{sec:method:step1}

\paragraph{Representation logging.}
We first construct a calibration set. For each sample $(I, x)$, we obtain a reference answer $\hat{y}=(\hat{y}_1,\dots,\hat{y}_T)$ by running the model conditioned on the image. We then record token-level hidden representations $\{h^{(\ell)}_t\}_{\ell=1..L,\; t=1..T}$ corresponding to $\hat{y}$. This yields token-indexed representations that we will partition into visually sensitive and insensitive subsets.

\paragraph{Token labeling.}
We estimate how much each token depends on the image by comparing teacher-forced log-probabilities with and without the image input. Specifically, for each step $t$ we compute:
\begin{equation}
\label{eq:delta}
\Delta_t
=
\log p_\theta(\hat{y}_t \mid x, I, \hat{y}_{<t})
-
\log p_\theta(\hat{y}_t \mid x, \varnothing, \hat{y}_{<t}).
\end{equation}
We interpret $|\Delta_t|$ as the magnitude of visual dependence at step $t$: larger $|\Delta_t|$ indicates that the token likelihood is more strongly affected by the image condition. The sign of $\Delta_t$ indicates whether the image increases ($\Delta_t>0$) or decreases ($\Delta_t<0$) the likelihood of the reference token.

Let $\mathcal{T}$ be the set of valid answer-token indices. We form pseudo-labels by selecting extreme subsets under $|\Delta_t|$:
\begin{equation}
\label{eq:token-label}
\begin{aligned}
\mathcal{T}^{+} &= \textsc{Top}_{\rho}\big(\{| \Delta_t |\}_{t\in\mathcal{T}}\big), \\
\mathcal{T}^{-} &= \textsc{Bottom}_{\rho}\big(\{| \Delta_t |\}_{t\in\mathcal{T}}\big).
\end{aligned}
\end{equation}
where $\rho \in (0,1)$ is a fraction (percentage) of tokens. Here $\mathcal{T}^{+}$ denotes visually sensitive tokens (large $|\Delta_t|$) and $\mathcal{T}^{-}$ denotes visually insensitive tokens (small $|\Delta_t|$, i.e., closest to $0$). We use these pseudo-labels to estimate per-layer directions that preferentially increase image reliance at the corresponding decoding steps.

\paragraph{Direction estimation.}
For each layer $\ell$, we L2-normalize token representations:
\begin{equation}
\tilde{h}^{(\ell)}_t = \frac{h^{(\ell)}_t}{\|h^{(\ell)}_t\|_2 + \epsilon},
\end{equation}
and compute a per-sample difference-of-means vector:
\begin{equation}
\label{eq:diffmeans}
d^{(\ell)} =
\frac{1}{|\mathcal{T}^{+}|}\sum_{t\in\mathcal{T}^{+}}\tilde{h}^{(\ell)}_t
-
\frac{1}{|\mathcal{T}^{-}|}\sum_{t\in\mathcal{T}^{-}}\tilde{h}^{(\ell)}_t .
\end{equation}
We aggregate $\{d^{(\ell)}\}$ across calibration samples and take the first principal component to obtain the initial steering vector:
\begin{equation}
\label{eq:pca}
W^{(\ell)}_{\text{init}} = \mathrm{PCA}_1\Big(\{d^{(\ell)}_i\}_{i=1}^{N}\Big),
\qquad
W^{(\ell)}_{\text{init}} \leftarrow \frac{W^{(\ell)}_{\text{init}}}{\|W^{(\ell)}_{\text{init}}\|_2}.
\end{equation}
Intuitively, moving in the direction $W^{(\ell)}_{\text{init}}$ shifts token representations toward those that are more affected by image conditioning.

\subsection{Step 2: Supervised refinement of steering vectors}
\label{sec:method:step2}

After estimating initial per-layer steering directions $W_{\text{init}}^{(\ell)}$ from the pseudo-labeled token sets $\mathcal{T}^{+}$ and $\mathcal{T}^{-}$, we refine these directions using supervised image--question--human-corrected answer triples from the RLHF-V dataset~\citep{yu2024rlhf}, where human annotators correct hallucinated segments in a model-produced response. 

We optimize only the steering vectors $W^{(\ell)}$ for $\ell\in\mathcal{S}$. During training, we apply a fixed steering strength $\lambda$ in Eq.~\eqref{eq:steer-inject}.

\paragraph{Paired teacher forcing (reference vs.\ steered).}
For each training sample $(I,x,y)$, we run two teacher-forcing passes under the \emph{same} image input:
(i) a reference pass with steering disabled, yielding $p_\theta(\cdot \mid x, I, y_{<t})$; and
(ii) a steered pass with steering enabled, yielding $p_{\theta,W}(\cdot \mid x, I, y_{<t})$, where gradients flow only to $W^{(\ell)}$.
This design isolates the effect of steering while holding the image input fixed.

We optimize the following objective on answer tokens $\mathcal{A}$:
\begin{equation}
\label{eq:refine_obj}
\mathcal{L}
=
\mathcal{L}_{\text{NLL}}
+
\beta\,\mathcal{L}_{\text{KL}}
+
\gamma\,\mathcal{L}_{\text{prox}}.
\end{equation}
The negative log-likelihood over answer tokens is:
\begin{equation}
\label{eq:nll}
\mathcal{L}_{\text{NLL}}
=
-\frac{1}{|\mathcal{A}|}\sum_{t\in\mathcal{A}} \log p_{\theta,W}(y_t \mid x, I, y_{<t}).
\end{equation}
To limit distribution shift, we regularize with a trust-region KL, optionally approximated on the top-$K$ probability mass with renormalization for efficiency.
\begin{equation}
\label{eq:kl}
\mathcal{L}_{\text{KL}}
=
\frac{1}{|\mathcal{A}|}\sum_{t\in\mathcal{A}}
\mathrm{KL}\!\left(p_{\theta,W}(\cdot\mid x,I,y_{<t})\,\|\,p_\theta(\cdot\mid x,I,y_{<t})\right).
\end{equation}

Finally, we apply a proximal penalty anchoring vectors to their initialization:
\begin{equation}
\label{eq:prox}
\mathcal{L}_{\text{prox}}
=
\frac{1}{|\mathcal{S}|}\sum_{\ell\in\mathcal{S}} \left\|W^{(\ell)} - W^{(\ell)}_{\text{init}}\right\|_2^2.
\end{equation}

\subsection{Step 3: Visual-sensitivity–adaptive steering}
\label{sec:method:step3}
We perform stepwise decoding and adapt the steering strength $\lambda_t$ using a KL-based visual sensitivity signal computed from a dual-route forward process. The key idea is to allocate stronger steering to steps that exhibit high dependence on the image.

At each decoding step $t$, we compute logits from two routes:
(i) an image-conditioned route producing logits $z^I_t = f_\theta(x, I, y_{<t})$; and
(ii) a no-image route producing logits $z^\varnothing_t = f_\theta(x, \varnothing, y_{<t})$.
We define the visual sensitivity score as:
\begin{equation}
\label{eq:vs_kl}
\mathrm{VS}_t
=
\mathrm{KL}\!\left(
\mathrm{softmax}\!\left(\frac{z^I_t}{T_{KL}}\right)
\;\middle\|\;
\mathrm{softmax}\!\left(\frac{z^\varnothing_t}{T_{KL}}\right)
\right),
\end{equation}
where ${T_{KL}}$ is a temperature used for the KL computation. For efficiency, we optionally approximate the KL on a truncated top-$K$ vocabulary (selected by the image-conditioned route and renormalized). We standardize the score using calibration statistics $(\mu,\sigma)$ computed on a held-out calibration set:
\begin{equation}
\label{eq:vs_norm}
\bar{\mathrm{VS}}_t = \frac{\mathrm{VS}_t - \mu}{\sigma + \epsilon}.
\end{equation}

We map $\bar{\mathrm{VS}}_t$ to a soft gate via a sigmoid:
\begin{equation}
\label{eq:gate}
g_t
=
\operatorname{sigmoid}\!\left(\frac{\bar{\mathrm{VS}}_t - b}{s}\right),
\end{equation}
and convert it to a steering strength within $[\lambda_{\min},\lambda_{\max}]$:
\begin{equation}
\label{eq:lam_tilde}
\tilde{\lambda}_t
=
\lambda_{\min} + (\lambda_{\max}-\lambda_{\min})\,g_t.
\end{equation}

To reduce stepwise jitter, we optionally apply exponential smoothing:
\begin{equation}
\label{eq:smooth}
\hat{\lambda}_t
=
\alpha\,\hat{\lambda}_{t-1} + (1-\alpha)\,\tilde{\lambda}_t.
\end{equation}
We set $\hat{\lambda}_0=\lambda_{\min}$.
We further constrain the steering strength using a per-step cap $\lambda^{\text{cap}}_t$, derived from a confidence proxy of the image-conditioned logits $z_t^I$ (see Appendix~\ref{app:cap}), and take:

\begin{equation}
\label{eq:cap}
\lambda_t = \min\!\big(\hat{\lambda}_t, \lambda^{\text{cap}}_t\big).
\end{equation}

\section{Experiments}
\label{sec:experiments}

\providecommand{\best}[1]{\textbf{#1}}
\providecommand{\second}[1]{\underline{#1}}

\definecolor{oursgray}{gray}{0.93}

\begin{table*}[t]
  \centering
  \caption{Main results.
TLVS consistently improves hallucination-related metrics across POPE, HallusionBench, CHAIR (COCO), and AMBER on both LLaVA-1.5-7B and Qwen2.5-VL-7B.
$\uparrow$/$\downarrow$ indicate that higher/lower is better; \textbf{bold}/\underline{underline} mark the best/second-best within each backbone; gray highlights TLVS, with $\uparrow/\downarrow$ showing absolute gains over the vanilla baseline.}

  \label{tab:main_results}
  \scriptsize
  \setlength{\tabcolsep}{2.6pt}
  \renewcommand{\arraystretch}{1.05}

  \begin{adjustbox}{max width=\textwidth}
  \begin{tabular}{l
                  cccc
                  ccc
                  ccc
                  cccccccc}
    \toprule
    \multirow{2}{*}{Model} &
    \multicolumn{4}{c}{POPE} &
    \multicolumn{3}{c}{HallusionBench} &
    \multicolumn{3}{c}{CHAIR (COCO)} &
    \multicolumn{8}{c}{AMBER} \\
    \cmidrule(lr){2-5}
    \cmidrule(lr){6-8}
    \cmidrule(lr){9-11}
    \cmidrule(lr){12-19}
    &
    Adv.$\uparrow$ & Pop.$\uparrow$ & Rand.$\uparrow$ & Avg.$\uparrow$ &
    qAcc$\uparrow$ & fAcc$\uparrow$ & aAcc$\uparrow$ &
    CHAIRs$\downarrow$ & CHAIRi$\downarrow$ & Recall$\uparrow$ &
    CHAIR$\downarrow$ & Cover$\uparrow$ & Hal$\downarrow$ & Cog$\downarrow$ &
    F1-E$\uparrow$ & F1-A$\uparrow$ & F1-R$\uparrow$ & F1$\uparrow$ \\
    \midrule

    LLaVA-1.5-7B
      & 81.80 & 84.36 & 89.12 & 85.09
      & 13.19 & 16.18 & 45.70
      & 46.0 & 12.8 & 78.9
      & 12.0 & 50.3 & 36.4 & 4.6
      & 83.2 & 65.6 & 62.4 & 74.7 \\
    + VCD
      & 81.33 & 85.06 & 87.16 & 84.52
      & 12.75 & 19.08 & 46.41
      & 47.8 & 14.1 & \best{82.7}
      & 10.1 & 51.2 & 43.6 & 4.3
      & 84.1 & 63.6 & 66.4 & 74.8 \\
    + SPARC
      & 81.03 & 86.05 & 89.00 & 85.36
      & 13.19 & 17.34 & 35.25
      & 43.4 & 16.8 & 61.1
      & 14.5 & 44.6 & \second{23.1} & 2.4
      & 88.5 & 66.7 & 65.5 & 77.9 \\
    + ASD
      & \textemdash & \textemdash & \textemdash & \second{87.87}
      & \textemdash & \textemdash & \textemdash
      & 40.0 & 11.3 & 82.0
      & \textemdash & \textemdash & \textemdash & \textemdash
      & \textemdash & \textemdash & \textemdash & \textemdash \\
    + SHARP
      & 79.00 & 83.20 & 85.30 & 82.50
      & \textemdash & \textemdash & \textemdash
      & \second{34.8} & \second{10.6} & 59.7
      & 8.5 & \best{52.1} & 39.2 & 4.8
      & \textemdash & \textemdash & \textemdash & \textemdash \\
    + VTI
      & 80.43 & 85.53 & 88.50 & 84.82
      & 13.19 & 15.90 & 44.11
      & 35.8 & 11.1 & 76.8
      & \second{4.7} & 47.2 & 27.0 & \second{1.8}
      & 85.8 & 67.5 & \second{68.3} & 77.1 \\
    + VISTA
      & 80.98 & \second{88.87} & \second{89.25} & 86.37
      & 14.29 & 19.36 & 45.82
      & 44.6 & 13.4 & 77.0
      & 6.0 & 39.9 & 27.1 & 2.4
      & 87.6 & 67.3 & 55.9 & 77.6 \\
    + DPO (RLHF-V)
      & \second{84.03} & 85.94 & 89.12 & 86.36
      & \best{16.70} & \best{20.81} & \best{51.31}
      & 43.2 & 11.9 & 76.4
      & 5.7 & 49.7 & 27.3 & 2.6
      & \second{90.7} & \second{72.6} & 64.6 & \second{80.9} \\
    \rowcolor{oursgray}
    + TLVS (ours)
      & \best{87.85}\hspace{0.2em}\impup{6.05}
      & \best{89.42}\hspace{0.2em}\impup{5.06}
      & \best{92.08}\hspace{0.2em}\impup{2.96}
      & \best{89.78}\hspace{0.2em}\impup{4.69}
      & \second{15.85}\hspace{0.2em}\impup{2.66}
      & \second{19.95}\hspace{0.2em}\impup{3.77}
      & \second{46.50}\hspace{0.2em}\impup{0.80}
      & \best{31.2}\hspace{0.2em}\impdown{14.8}
      & \best{9.7}\hspace{0.2em}\impdown{3.1}
      & \second{82.2}\hspace{0.2em}\impup{3.3}
      & \best{3.8}\hspace{0.2em}\impdown{8.2}
      & \second{51.8}\hspace{0.2em}\impup{1.5}
      & \best{19.7}\hspace{0.2em}\impdown{16.7}
      & \best{1.5}\hspace{0.2em}\impdown{3.1}
      & \best{91.5}\hspace{0.2em}\impup{8.3}
      & \best{73.0}\hspace{0.2em}\impup{7.4}
      & \best{69.7}\hspace{0.2em}\impup{7.3}
      & \best{82.2}\hspace{0.2em}\impup{7.5} \\
    \midrule\midrule

    Qwen2.5-VL-7B
      & 84.92 & 86.78 & 89.01 & 86.90
      & 23.74 & 31.21 & 55.54
      & 24.0 & 7.0 & \best{86.7}
      & 5.4 & \best{51.6} & 29.0 & 1.9
      & \second{97.3} & \best{83.8} & \second{72.6} & \second{89.4} \\
    + VCD
      & 83.92 & 85.80 & 88.41 & 86.04
      & 31.43 & 35.26 & 61.82
      & 29.8 & 9.7 & 58.0
      & 6.2 & 50.3 & 36.5 & 2.1
      & 91.6 & 78.6 & 68.0 & 83.8 \\
    + DPO (RLHF-V)
      & \second{84.99} & \second{86.81} & \second{89.72} & \second{87.17}
      & \best{34.95} & \best{39.60} & \second{62.62}
      & \second{13.4} & \best{3.5} & 62.7
      & \best{3.1} & 50.7 & \second{21.9} & \second{1.1}
      & \best{98.5} & 82.5 & 65.9 & 88.3 \\
    \rowcolor{oursgray}
    + TLVS (ours)
      & \best{85.10}\hspace{0.2em}\impup{0.18}
      & \best{87.01}\hspace{0.2em}\impup{0.23}
      & \best{89.88}\hspace{0.2em}\impup{0.87}
      & \best{87.33}\hspace{0.2em}\impup{0.43}
      & \second{33.08}\hspace{0.2em}\impup{9.34}
      & \second{37.19}\hspace{0.2em}\impup{5.98}
      & \best{63.23}\hspace{0.2em}\impup{7.69}
      & \best{12.0}\hspace{0.2em}\impdown{12.0}
      & \second{5.2}\hspace{0.2em}\impdown{1.8}
      & \second{83.9}\hspace{0.2em}\worsedown{2.8}
      & \second{3.8}\hspace{0.2em}\impdown{1.6}
      & \second{51.2}\hspace{0.2em}\worsedown{0.4}
      & \best{16.7}\hspace{0.2em}\impdown{12.3}
      & \best{1.0}\hspace{0.2em}\impdown{0.9}
      & \second{97.3}
      & \second{82.7}\hspace{0.2em}\worsedown{1.1}
      & \best{73.0}\hspace{0.2em}\impup{0.4}
      & \best{89.8}\hspace{0.2em}\impup{0.4} \\
    \bottomrule
  \end{tabular}
  \end{adjustbox}
\end{table*}

We evaluate TLVS in a staged manner:
(i) we first report main results on standard benchmarks to assess overall effectiveness (\Cref{sec:main_results});
(ii) we conduct ablation studies to isolate the contributions of TLVS’s refinement and adaptive steering mechanisms (\Cref{sec:ablation});
(iii) we provide mechanistic and qualitative analyses to verify that TLVS allocates intervention budget to hallucinated spans at the token level (\Cref{sec:mech_qual});
and (iv) we measure inference-time overhead to quantify efficiency and practicality (\Cref{sec:efficiency}).

\subsection{Experimental Setup}
\label{sec:exp_setup}

\paragraph{Models and Data.}
We conduct our main experiments on LLaVA-1.5-7B~\citep{liu2024improved}, a widely used open-source LVLM backbone.
To verify the generality of TLVS on a newer and stronger architecture, we additionally report results on Qwen2.5-VL-7B~\citep{bai2025qwen2}. Both the extraction of activation steering vectors and the optional supervised refinement use examples from the RLHF-V correction dataset~\citep{yu2024rlhf}.

\paragraph{Benchmarks.}
We evaluate TLVS on five widely adopted hallucination benchmarks covering both probing-style and open-ended generation settings:
POPE~\citep{li2023evaluating},
AMBER~\citep{wang2023amber},
HallusionBench~\citep{guan2024hallusionbench},
MMHal-Bench~\citep{sun2024aligning},
and COCO Captioning with CHAIR metrics~\citep{rohrbach2018object}.
These benchmarks collectively capture object existence and attribute errors, open-ended description faithfulness, and multimodal reasoning hallucinations.

\paragraph{Baselines.}
For a fair comparison, we evaluate TLVS against the base model and a diverse set of representative hallucination mitigation baselines, covering both inference-time control (predominantly training-free) and training-time alignment. 
Among training-free methods that do not explicitly inject activation-steering directions, we include VCD~\citep{leng2024mitigating} and SPARC~\citep{jung2025visual}. 
For activation-space steering approaches, we consider ASD~\citep{su2025activation} and SHARP~\citep{wu2025sharp}, as well as recent inference-time intervention methods VTI~\citep{liu2025reducing} and VISTA~\citep{li2025hidden}. 
Finally, as a representative training-based baseline, we report results from RLHF-V-style behavior alignment~\citep{yu2024rlhf} when applicable.
\begin{figure}[t]
  \centering
  \vspace{-2mm}
  \includegraphics[width=\linewidth,trim=10 10 10 8,clip]{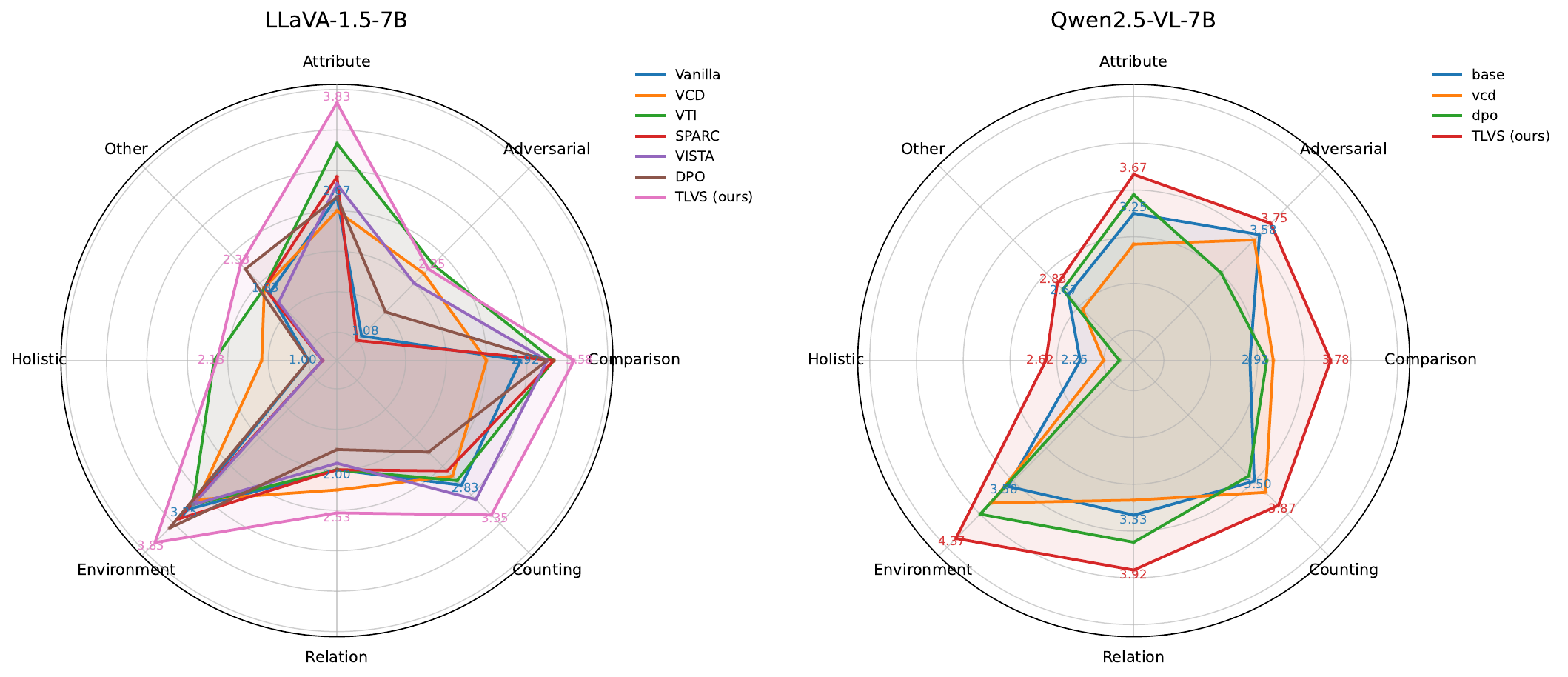}
  \vspace{-2mm}
  \caption{
  MMHal-Bench performance across question categories (attributes, adversarial objects, comparisons, counting, spatial, environmental, holistic, and others).
  }
  \vspace{-2mm}
  \label{fig:MMhal_radar}
\end{figure}

\paragraph{Implementation details.}
Unless otherwise noted, we use a unified prompting template and decoding configuration across methods for fair comparison.
Full generation settings and implementation details are provided in Appendix~\Cref{app:exp_details}.

\subsection{Main Results on Standard Benchmarks}
\label{sec:main_results}
As shown in Table~\ref{tab:main_results}, TLVS yields overall gains on both LLaVA-1.5-7B and Qwen2.5-VL-7B across POPE, HallusionBench, CHAIR (COCO), and AMBER. These benchmarks probe complementary hallucination settings, spanning object existence, open-ended captioning, diverse visual reasoning, and the factuality--coverage trade-off. Overall, TLVS achieves the best or second-best results on most metrics, demonstrating strong effectiveness and cross-backbone generalization.

TLVS consistently improves visual grounding across both object-centric QA and open-ended captioning. On POPE, it increases average accuracy from 85.09 to 89.78 on LLaVA-1.5-7B and from 86.90 to 87.33 on Qwen2.5-VL-7B, with gains holding across all splits. On COCO captioning, TLVS further reduces hallucinated objects for both backbones: for LLaVA-1.5-7B, CHAIRs decreases from 46.0 to 31.2 and CHAIRi from 12.8 to 9.7, while Recall rises from 78.9 to 82.2; for Qwen2.5-VL-7B, hallucination-related metrics also improve, demonstrating consistent gains across model families.
 
Importantly, TLVS reduces hallucinations without substantially sacrificing coverage. For LLaVA-1.5-7B on AMBER, Hal drops from 36.4 to 19.7 and Cog from 4.6 to 1.5, while Cover increases from 50.3 to 51.8. On Qwen2.5-VL-7B, Hal decreases from 29.0 to 16.7 and Cog from 1.9 to 1.0, with Cover stable.

Finally, we further conduct an analysis over MMHal categories (Fig.~\ref{fig:MMhal_radar}) to comprehensively characterize TLVS’s behavior across diverse question types.
The radar plots indicate that TLVS improves performance consistently across diverse question types, rather than relying on gains from a narrow subset. This pattern is consistent with TLVS applying targeted, visually grounded interventions under multi-faceted stress tests.

\begin{table}[t]
  \centering
  \caption{Ablation on refinement and token-level adaptive steering.
  $\uparrow/\downarrow$ indicate higher/lower is better; green/red arrows show absolute gains over the vanilla baseline within each backbone.}
  \label{tab:ablation_refine_gate}

  \scriptsize
  \setlength{\tabcolsep}{2.2pt}
  \renewcommand{\arraystretch}{1.05}

  \begin{adjustbox}{max width=\columnwidth}
  \begin{tabular}{lcc c cccc}
    \toprule
    \multirow{2}{*}{Method} &
    \multicolumn{2}{c}{Components} &
    POPE &
    \multicolumn{4}{c}{AMBER} \\
    \cmidrule(lr){2-3}\cmidrule(lr){4-4}\cmidrule(lr){5-8}
    & Refine & Adapt. &
    Avg$\uparrow$ &
    CHAIR$\downarrow$ & Cover$\uparrow$ & Hal$\downarrow$ & F1$\uparrow$ \\
    \midrule

    \multicolumn{8}{l}{\textbf{LLaVA-1.5-7B}} \\
    Vanilla
      & -- & -- & 85.09 & 12.0 & 50.3 & 36.4 & 74.7 \\
    TLVS-Init
      & $\times$ & $\times$
      & 86.83\hspace{0.2em}\impup{1.74}
      & 7.4\hspace{0.2em}\impdown{4.6}
      & 45.4\hspace{0.2em}\worsedown{4.9}
      & 35.9\hspace{0.2em}\impdown{0.5}
      & 75.0\hspace{0.2em}\impup{0.3} \\
    TLVS-Refine
      & \checkmark & $\times$
      & 88.71\hspace{0.2em}\impup{3.62}
      & 4.1\hspace{0.2em}\impdown{7.9}
      & 44.5\hspace{0.2em}\worsedown{5.8}
      & 28.1\hspace{0.2em}\impdown{8.3}
      & 80.8\hspace{0.2em}\impup{6.1} \\
    \rowcolor{oursgray}
    TLVS
      & \checkmark & \checkmark
      & 89.78\hspace{0.2em}\impup{4.69}
      & 3.8\hspace{0.2em}\impdown{8.2}
      & 51.8\hspace{0.2em}\impup{1.5}
      & 19.7\hspace{0.2em}\impdown{16.7}
      & 82.2\hspace{0.2em}\impup{7.5} \\
    \midrule

    \multicolumn{8}{l}{\textbf{Qwen2.5-VL-7B}} \\
    Vanilla
      & -- & -- & 86.90 & 5.4 & 51.6 & 29.0 & 89.4 \\
    TLVS-Init
      & $\times$ & $\times$
      & 86.83\hspace{0.2em}\worsedown{0.07}
      & 5.0\hspace{0.2em}\impdown{0.4}
      & 50.8\hspace{0.2em}\worsedown{0.8}
      & 26.8\hspace{0.2em}\impdown{2.2}
      & 87.6\hspace{0.2em}\worsedown{1.8} \\
    TLVS-Refine
      & \checkmark & $\times$
      & 87.00\hspace{0.2em}\impup{0.10}
      & 4.2\hspace{0.2em}\impdown{1.2}
      & 48.9\hspace{0.2em}\worsedown{2.7}
      & 20.0\hspace{0.2em}\impdown{9.0}
      & 88.8\hspace{0.2em}\worsedown{0.6} \\
    \rowcolor{oursgray}
    TLVS
      & \checkmark & \checkmark
      & 87.33\hspace{0.2em}\impup{0.43}
      & 3.8\hspace{0.2em}\impdown{1.6}
      & 51.2\hspace{0.2em}\worsedown{0.4}
      & 16.7\hspace{0.2em}\impdown{12.3}
      & 89.8\hspace{0.2em}\impup{0.4} \\
    \bottomrule
  \end{tabular}
  \end{adjustbox}
  \vspace{-1mm}
\end{table}


\subsection{Ablation Study}
\label{sec:ablation}

To isolate the contribution of each component, we compare three variants.
\textbf{TLVS-Init} uses the initial vectors $W_{\text{init}}$ with a fixed steering strength $\lambda$, thereby testing whether the raw direction is already informative.
\textbf{TLVS-Refine} replaces $W_{\text{init}}$ with the supervisedly refined vectors $W$, while keeping $\lambda$ fixed, isolating the effect of improving direction quality.
\textbf{TLVS} is the full method, combining $W$ with token-level adaptive steering.

Table~\ref{tab:ablation_refine_gate} isolates the roles of vector refinement and token-level adaptive steering.
On LLaVA-1.5-7B, vector refinement (TLVS-Refine) improves POPE Avg.\ from 86.83 to 88.71 and reduces AMBER hallucinations, at a small cost in coverage.
Adding token-level adaptive steering, the full TLVS further reduces hallucinations while recovering coverage to 51.8 and improving F1 from 80.8 to 82.2.
A consistent pattern is observed on Qwen2.5-VL-7B: refinement reduces hallucinations but lowers coverage, whereas the full TLVS recovers coverage to 51.2 and further decreases hallucinations.

\subsection{Mechanistic and Qualitative Analyses}
\label{sec:mech_qual}
TLVS produces a stepwise steering strength $\lambda_t \in [\lambda_{\min}, \lambda_{\max}]$ from the visual-sensitivity signal (Eq.~\eqref{eq:vs_kl}--\eqref{eq:cap}). To verify that TLVS steers where it matters, we analyze how the adaptive schedule distributes steering budget across token types and sample groups.
We divide tokens into visually critical and visually uncritical ones, and further split the visually critical tokens into hallucinated vs.\ non-hallucinated ones based on annotations of the final generated output. For simplicity, we leverage COCO annotations and CHAIR~\citep{rohrbach2018object} to map these three groups to \texttt{hall\_obj}, \texttt{correct\_obj}, and \texttt{other} tokens, respectively.
More specifically, we stratify samples by the number of object mentions in the generated output and analyze each group separately.

As shown in Fig.~\ref{fig:adaptive_strength_allocation}, first, visually critical tokens receive substantially higher adaptive strength than the overall average (\texttt{all}), indicating that TLVS concentrates budget on steps that require visual grounding; second, within visually critical tokens, hallucinated tokens receive higher adaptive strength than non-hallucinated tokens, suggesting that the adaptive schedule concentrates interventions on hallucination-prone steps rather than uniformly perturbing the full sequence, consistent with our token-level sparsity diagnosis. Finally, \texttt{has\_hall} samples tend to receive higher adaptive strength than \texttt{no\_hall}, though residual hallucinations can remain due to bounded strength and imperfect step selection.
Together, these findings provide evidence that TLVS reallocates steering budget toward visually critical steps and reduces collateral perturbations on image-insensitive tokens.

\begin{figure}[t]
  \centering

  \includegraphics[width=\columnwidth]{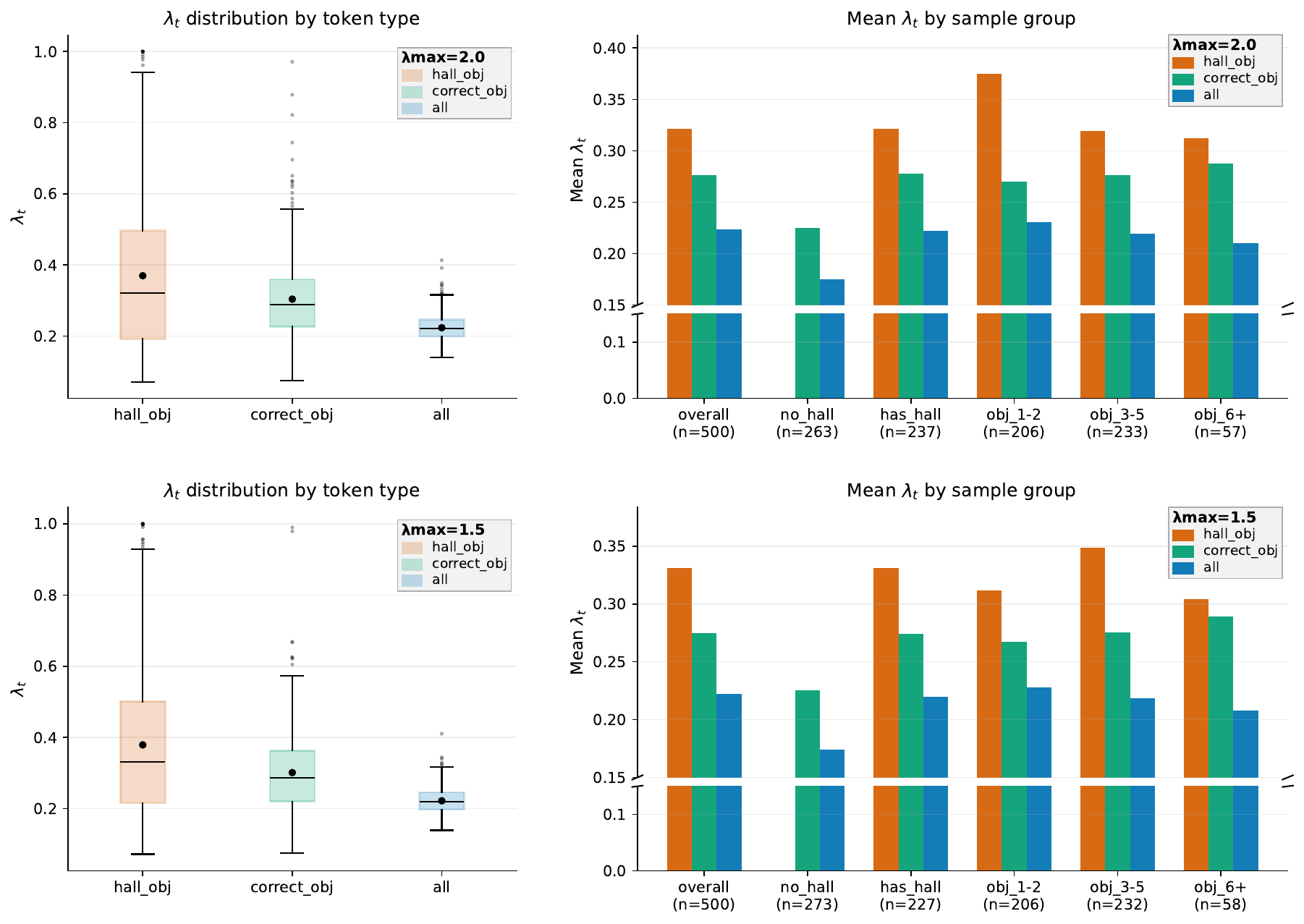}

  \caption{
  Adaptive steering focuses on critical tokens.
  $\lambda_t$ distribution (left) and mean $\lambda_t$ by token group (right).
  }

  \label{fig:adaptive_strength_allocation}
\end{figure}

\subsection{Efficiency and Overhead}
\label{sec:efficiency}

\begin{table}[t]
\centering
\caption{
Efficiency and Overhead. (i) Fwd/Token, the average number of forward passes per generated token; (ii) ms/Token, wall-clock latency per token; (iii) Sec/Sample, end-to-end generation time per sample; and (iv) Peak VRAM, the maximum GPU memory footprint during inference. Lower is better for all four metrics ($\downarrow$).}

\label{tab:inference_cost}
\scriptsize
\setlength{\tabcolsep}{3.5pt}
\renewcommand{\arraystretch}{1.05}
\resizebox{\linewidth}{!}{%
\begin{tabular}{lcccc}
\toprule
Method & Fwd/Token $\downarrow$ & ms/Token $\downarrow$ & Sec/Sample $\downarrow$ & Peak VRAM (GB) $\downarrow$ \\
\midrule
Base & 1.00 & 31.25 & 3.34 & 13.95 \\
TLVS & 2.00 & 64.01 & 6.79 & 14.03 \\
\midrule
Overhead (Ours/Base) & 2.00$\times$ & 2.05$\times$ & 2.04$\times$ & 1.01$\times$ \\
\bottomrule
\end{tabular}%
}
\vspace{-1mm}
\end{table}
We measure inference-time overhead on the COCO captioning evaluation set, using the same prompts and decoding settings as in our main experiments.
As shown in Table~\ref{tab:inference_cost}, TLVS introduces a predictable $\sim$2$\times$ inference-time overhead, since it uses an auxiliary contrastive route similar to two-branch decoding methods (e.g., VCD~\citep{leng2024mitigating}).
Notably, the peak VRAM increase is negligible (1.01$\times$), making TLVS a practical, training-free reliability knob for improving grounding and factuality without additional parameters or repeated sampling.

\section{Conclusion}
\label{sec:conclusion}
We study activation steering in LVLMs and find that image conditioning is sparse and heavy-tailed across decoding steps, making step-invariant global steering low-SNR for direction extraction and inefficient in budget allocation.
Motivated by this, we propose Token-Level Visual-Sensitivity Steering (TLVS), which extracts token-level steering vectors, optionally refines them with lightweight corrections, and applies step-adaptive steering only where it matters. Across POPE, AMBER, CHAIR (COCO), MMHal-Bench, and HallusionBench, TLVS consistently reduces hallucinations on both LLaVA-1.5-7B and Qwen2.5-VL-7B while largely preserving coverage and generation quality.

\section*{Acknowledgements}
This work was funded in part by the National Natural Science Foundation of China under Grant No. 62536004, in part by the STI2030-Major Projects grant from the Ministry of Science and Technology of the People's Republic of China under Grant No. 2021ZD0200700, in part by the Key-Area Research and Development Program of Guangdong Province under Grant No. 2023B0303030001, and in part by the Science and Technology Program of Guangzhou under Grant No. 2024A04J6310.

\section*{Impact Statement}
This work aims to reduce hallucinations in large vision-language models by improving the visual faithfulness of generated responses. It may help make multimodal systems more reliable in applications such as visual question answering, image captioning, and assistive technologies. However, our method does not eliminate hallucinations completely and should not be regarded as a guarantee of factual or perceptual correctness. In high-stakes scenarios, model outputs should still be carefully verified by humans or external validation systems.


\bibliography{example_paper}
\bibliographystyle{icml2026}

\newpage
\appendix
\onecolumn

\section{Details of Observation 1}

\label{app:diagnosis3.1}

We detail the token-level visual-sensitivity analysis used in
Figure~\ref{fig:sparsity_main} and Figure~\ref{fig:sparsity_hex}.
Our goal is to quantify, at each decoding step, how much the presence of an image changes
the model's likelihood assigned to the same reference response under teacher forcing.
The overall procedure is summarized in Algorithm~\ref{alg:vs_tf}.
Unless otherwise specified, we run this analysis on the AMBER benchmark~\citep{wang2023amber}.

\paragraph{AMBER protocol and annotations.}
AMBER provides images with structured annotations and an LLM-free evaluation protocol
covering existence, attribute, and relation hallucinations~\citep{wang2023amber}.
Following AMBER's generative setting, we obtain the reference response $y_{1:T}$~\citep{wang2023amber}.

\paragraph{Reference response and token range.}
Given an input sample $x$ (including the prompt and image, when applicable),
we first obtain a fixed reference response $y_{1:T}$ by running the model with vanilla decoding
(i.e., without steering).
All token-level statistics are computed on the \emph{answer tokens} $y_{1:T}$ only, excluding prompt tokens and any special tokens outside the response span.

\paragraph{Teacher-forced per-token log-probabilities.}
For each step $t$, we compute the next-token log-probability under teacher forcing as
\begin{equation}
\log p(y_t \mid x, \cdot, y_{<t}),
\end{equation}
where $\cdot$ indicates the conditioning variant (image-conditioned vs.\ image-ablated).
In practice, we obtain all steps from a single forward pass over the concatenated sequence (prompt + response)
and read the log-probability assigned to the reference token $y_t$ at each position.

\paragraph{Image-conditioned vs.\ image-ablated inputs.}
We construct two matched conditions that share the identical text and response tokens:
(i) an image-conditioned input $(x, I)$ and (ii) an image-ablated (or weakly conditioned) input $(x, \varnothing)$.
Crucially, we keep the textual prefix $y_{<t}$ identical in both conditions for every step $t$,
so that the measured difference reflects image conditioning rather than sampling-path divergence.
For models that require explicit image placeholder tokens, we keep the same placeholder structure
(e.g., the same number of image tokens) while removing the visual content in the ablated variant.

\paragraph{Token-level visual sensitivity.}
We quantify token-level visual sensitivity via the teacher-forced log-likelihood difference:
\begin{equation}
\Delta_t
=
\log p(y_t \mid x, I, y_{<t})
-
\log p(y_t \mid x, \varnothing, y_{<t}),
\label{eq:delta_logp_tf_app}
\end{equation}
which matches Eq.~\eqref{eq:delta_logp_tf} in the main text.
By construction, $\Delta_t$ is token-aligned across the two conditions.
We visualize per-example $\Delta_t$ along decoding steps using lollipop plots, and highlight
large-magnitude steps by thresholding $|\Delta_t|\ge \tau$.

\paragraph{Dataset-level aggregation and relative position.}
To aggregate across varying response lengths, we associate each answer token with its relative position
$r_t = t/T \in (0,1]$.
The hexbin density plot in Figure~\ref{fig:sparsity_hex} is produced by pooling pairs $(r_t, \Delta_t)$
over all examples and plotting the log-scaled bin counts (i.e., $\log_{10}(\text{count})$) to reveal
the concentration near $\Delta_t \approx 0$ and the sparse heavy-tailed deviations.

\paragraph{Concentration metrics (top-$p$ mass).}
To characterize how much of the total $|\Delta_t|$-mass is carried by a small fraction of steps,
for each response we rank tokens by $|\Delta_t|$ and compute the cumulative mass captured by the top fraction $p$:
\begin{equation}
S(p)
=
\frac{\sum_{t \in \mathrm{Top}(p)} |\Delta_t|}
     {\sum_{t} |\Delta_t|},
\label{eq:topk_mass_app}
\end{equation}
where $\mathrm{Top}(p)$ denotes the set of the top $p$ fraction of tokens ranked by $|\Delta_t|$.
We compute $S(p)$ per response and report the dataset average.
We report $S(p)$ at representative values $p\in\{1,5,10,20,50,100\}\%$ and visualize the resulting
top-heavy accumulation pattern in Figure~\ref{fig:sparsity_main}.

\paragraph{Implication for steering direction extraction.}
The observed sparsity and mass concentration indicate that a small set of visually sensitive steps
dominates the image-vs-no-image difference signal.
Therefore, methods that average activation differences uniformly over all steps (as in Eq.~\eqref{eq:layerwise_aggregate_direction})
can dilute the contribution of these critical steps, yielding a lower signal-to-noise steering direction.

\begin{algorithm}[t]
\caption{Token-level visual sensitivity via teacher forcing}
\label{alg:vs_tf}
\small
\begin{algorithmic}[1]
\REQUIRE sample $x$, image $I$, ablated image $\varnothing$, LVLM $p_\theta$
\STATE Decode a reference response $y_{1:T}$ with vanilla decoding (no steering).
\STATE Compute $\log p(y_t\mid x,I,y_{<t})$ for all $t$ under teacher forcing.
\STATE Compute $\log p(y_t\mid x,\varnothing,y_{<t})$ for all $t$ under teacher forcing.
\STATE $\Delta_t \leftarrow \log p(y_t\mid x,I,y_{<t})-\log p(y_t\mid x,\varnothing,y_{<t})$.
\STATE Record $(r_t=t/T,\Delta_t)$ for hexbin; compute $S(p)$ by ranking $|\Delta_t|$ within the response.
\end{algorithmic}
\end{algorithm}

\section{Theory for Observation 1: signal dilution under sparse visual influence}
\label{app:obs1_theory}

We formalize why uniformly averaging activation differences across all steps can dilute the visual signal
when image influence is sparse, and why this issue is exacerbated when the two conditions generate different sequences.

\paragraph{Setup.}
Fix a layer $\ell$ and define the step-wise activation difference
$\delta_{\ell,t}(x)=h^{I}_{\ell,t}(x)-h^{\varnothing}_{\ell,t}(x)\in\mathbb{R}^d$.
Let $\mathcal{C}\subseteq\{1,\dots,T\}$ denote the (unknown) set of visually critical steps and let $q=|\mathcal{C}|/T$.

\begin{assumption}[Sparse-effect model]
\label{assump:sparse_effect}
There exists a vector $v_\star^{(\ell)}\in\mathbb{R}^d$ such that
\[
\mathbb{E}[\delta_{\ell,t}(x)\mid t\in\mathcal{C}] = v_\star^{(\ell)},\qquad
\mathbb{E}[\delta_{\ell,t}(x)\mid t\notin\mathcal{C}] = 0,
\]
and the per-step noise is zero-mean with bounded second moment:
$\mathbb{E}\big[\|\delta_{\ell,t}(x)-\mathbb{E}\delta_{\ell,t}(x)\|_2^2\big]\le \sigma^2$.
\end{assumption}

\begin{proposition}[Dilution of token-averaged differences]
\label{prop:dilution}
Under Assumption~\ref{assump:sparse_effect}, the token-averaged estimator used by step-invariant baselines,
\[
\hat v_{\text{avg}}^{(\ell)}=\frac{1}{T}\sum_{t=1}^T \delta_{\ell,t}(x),
\]
has expectation
\[
\mathbb{E}\big[\hat v_{\text{avg}}^{(\ell)}\big] = q\, v_\star^{(\ell)}.
\]
Moreover, consider the token-selected estimator
\[
\hat v_{\mathcal{C}}^{(\ell)}=\frac{1}{|\mathcal{C}|}\sum_{t\in\mathcal{C}} \delta_{\ell,t}(x),
\]
which satisfies $\mathbb{E}[\hat v_{\mathcal{C}}^{(\ell)}]=v_\star^{(\ell)}$.
If one rescales the averaged estimator to be unbiased, $\tilde v_{\text{avg}}^{(\ell)}=\hat v_{\text{avg}}^{(\ell)}/q$,
then its variance is worse by a factor on the order of $1/q$:
\[
\mathbb{E}\big[\|\tilde v_{\text{avg}}^{(\ell)}-v_\star^{(\ell)}\|_2^2\big]
\ \lesssim\ \frac{1}{q}\,
\mathbb{E}\big[\|\hat v_{\mathcal{C}}^{(\ell)}-v_\star^{(\ell)}\|_2^2\big].
\]
\end{proposition}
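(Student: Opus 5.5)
Expectation first: by linearity, $\mathbb{E}[\hat v_{\text{avg}}^{(\ell)}]=\frac1T\sum_{t\in\mathcal{C}}v_\star^{(\ell)}+\frac1T\sum_{t\notin\mathcal{C}}0=\frac{|\mathcal{C}|}{T}v_\star^{(\ell)}=q\,v_\star^{(\ell)}$, using the two conditional means in Assumption~\ref{assump:sparse_effect} with $\mathcal{C}$ treated as fixed. The same computation restricted to $\mathcal{C}$ gives $\mathbb{E}[\hat v_{\mathcal{C}}^{(\ell)}]=v_\star^{(\ell)}$. Writing $\xi_t=\delta_{\ell,t}(x)-\mathbb{E}\delta_{\ell,t}(x)$ for the zero-mean noise, these identities give the decompositions $\tilde v_{\text{avg}}^{(\ell)}-v_\star^{(\ell)}=\frac{1}{qT}\sum_{t=1}^T\xi_t=\frac{1}{|\mathcal{C}|}\sum_{t=1}^T\xi_t$ and $\hat v_{\mathcal{C}}^{(\ell)}-v_\star^{(\ell)}=\frac{1}{|\mathcal{C}|}\sum_{t\in\mathcal{C}}\xi_t$.

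For the variance comparison, I will add the standard modelling assumption needed to make ``variance'' comparisons meaningful: the noise terms are uncorrelated across steps, and each step has comparable noise level, say $\mathbb{E}\|\xi_t\|^2\asymp\sigma^2$, with $\sigma^2$ the upper bound from the assumption. Under uncorrelatedness, cross terms vanish. Then
\[
\mathbb{E}\|\tilde v_{\text{avg}}^{(\ell)}-v_\star^{(\ell)}\|^2=\frac{1}{|\mathcal{C}|^2}\sum_{t=1}^T\mathbb{E}\|\xi_t\|^2\le\frac{T\sigma^2}{|\mathcal{C}|^2}=\frac{\sigma^2}{q^2T},
\]
while
\[
\mathbb{E}\|\hat v_{\mathcal{C}}^{(\ell)}-v_\star^{(\ell)}\|^2=\frac{1}{|\mathcal{C}|^2}\sum_{t\in\mathcal{C}}\mathbb{E}\|\xi_t\|^2\approx\frac{\sigma^2}{|\mathcal{C}|}=\frac{\sigma^2}{qT}.
\]
The ratio is $1/q$, which is the claimed $\lesssim$ relation. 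The rescaled average pays for $T$ noise terms while normalizing only by $|\mathcal{C}|$; the selected estimator has $|\mathcal{C}|$ terms normalized by $|\mathcal{C}|$.

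The main obstacle is that the bare assumption only upper-bounds the per-step second moment. An upper bound on the right-hand side therefore cannot be deduced: if critical-step noise were tiny, the inequality would fail. So the $\lesssim$ must be read with the implicit constants given by a matching lower noise level $\mathbb{E}\|\xi_t\|^2\ge c\sigma^2$ on critical steps, together with uncorrelatedness (or bounded correlations, which only change constants). I will state these as the interpretation of ``on the order of''. I will also remark that correlated noise, such as from the path-divergence mentioned in the main text, only inflates the averaged estimator further, since its sum contains more cross terms.
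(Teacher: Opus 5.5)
Your argument is correct and follows the paper's route: linearity of expectation for both means, then the $\sigma^2/n$ scaling of an average of $n$ noise terms, yielding $\sigma^2/(q^2T)$ for $\tilde v_{\text{avg}}^{(\ell)}$ versus $\sigma^2/(qT)$ for $\hat v_{\mathcal{C}}^{(\ell)}$. The paper simply takes the ratio of these two upper bounds, so your explicit uncorrelatedness and matching lower bound on the critical-step noise are exactly what make its $\lesssim$ rigorous (note that under merely bounded correlations only the left-hand upper bound survives up to constants, and the right-hand lower bound must then be assumed directly, e.g.\ $\mathbb{E}\|\sum_{t\in\mathcal{C}}\xi_t\|_2^2\ge c\,|\mathcal{C}|\,\sigma^2$).
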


\begin{proof}
By linearity of expectation and Assumption~\ref{assump:sparse_effect},
\[
\mathbb{E}\big[\hat v_{\text{avg}}^{(\ell)}\big]
=\frac{1}{T}\sum_{t=1}^T \mathbb{E}[\delta_{\ell,t}(x)]
=\frac{|\mathcal{C}|}{T}v_\star^{(\ell)}=q\,v_\star^{(\ell)}.
\]
Similarly, $\mathbb{E}[\hat v_{\mathcal{C}}^{(\ell)}]=v_\star^{(\ell)}$.
For the variance comparison, under the bounded second-moment condition, averaging over $n$ steps yields mean-square error scaling as
$\mathcal{O}(\sigma^2/n)$ (up to constant factors depending on temporal dependence).
Thus,
\[
\mathbb{E}\big[\|\hat v_{\mathcal{C}}^{(\ell)}-v_\star^{(\ell)}\|_2^2\big]\lesssim \frac{\sigma^2}{|\mathcal{C}|}
=\frac{\sigma^2}{qT},
\]
while
\[
\mathbb{E}\big[\|\tilde v_{\text{avg}}^{(\ell)}-v_\star^{(\ell)}\|_2^2\big]
=\frac{1}{q^2}\mathbb{E}\big[\|\hat v_{\text{avg}}^{(\ell)}-\mathbb{E}\hat v_{\text{avg}}^{(\ell)}\|_2^2\big]
\lesssim \frac{1}{q^2}\cdot \frac{\sigma^2}{T}
=\frac{\sigma^2}{q^2T}.
\]
Taking the ratio gives an $\mathcal{O}(1/q)$ gap, completing the proof.
\end{proof}

\paragraph{Remark: additional path-divergence noise without token alignment.}
The above argument assumes that $\delta_{\ell,t}(x)$ isolates the effect of image conditioning at comparable steps.
In practice, when the image-conditioned and image-ablated conditions \emph{free-run} and generate different sequences,
$\delta_{\ell,t}(x)$ additionally absorbs a \emph{path-divergence} term induced by different textual contexts,
which can further reduce the effective SNR of token-averaged differences.
This motivates using token-aligned diagnostics (e.g., teacher forcing) to estimate visually critical steps,
and performing token-aware selection/weighting when extracting steering directions.

\section{Details of Observation 2}
\label{app:diagnosis3.2}

We detail the teacher-forced, token-aligned diagnosis used in
Figure~\ref{fig:global_vs_local_deltanll}.
The goal is to quantify, at each decoding step, how different intervention schedules
redistribute likelihood over an identical reference response.
Given an input sample $x$ (including the prompt and image, when applicable), we first obtain a
fixed reference response $y_{1:T}$ by running the model with vanilla decoding (no steering).
We then evaluate per-token NLLs on the same $y_{1:T}$ under teacher forcing for each schedule,
ensuring token-level alignment across conditions.

\paragraph{Per-token NLL under teacher forcing.}
For each step $t$, we compute
\begin{equation}
\mathrm{NLL}_t = -\log p(y_t \mid x, y_{<t}),
\end{equation}
by feeding the identical prefix $y_{<t}$ and reading the log-probability assigned to the next
reference token $y_t$.\footnote{In practice, we obtain all $\mathrm{NLL}_t$ from a single forward pass over the concatenated (prompt + response) sequence and read the next-token log-probability at each position.}
This yields a token-aligned likelihood profile over the same response $y_{1:T}$ for any intervention schedule.

\paragraph{Intervention schedules.}
We compare two schedules that share the same steering direction and intervened layer set:
\textbf{(i) Global step-invariant steering}, which applies a constant steering strength $\lambda_{\text{const}}$ at every decoding step; and
\textbf{(ii) Oracle token-aware steering} (diagnosis only), which uses AMBER annotations to define a set of visually critical steps $\mathcal{C}$ and applies a larger strength $\lambda_{\text{high}}$ for $t\in\mathcal{C}$ and a smaller strength $\lambda_{\text{low}}$ otherwise.
The oracle schedule is only used for diagnosis to probe the limitation of step-invariant steering and is not used at test time.

\paragraph{Token alignment for visually critical steps.}
We map AMBER-labeled spans (hallucination and object-related spans) to token indices of the reference response $y_{1:T}$ to obtain $\mathcal{C}$.
Concretely, we tokenize $y_{1:T}$ and use the tokenizer's offset mapping to align annotated character spans to token steps.
A token step $t$ is included in $\mathcal{C}$ if its token span overlaps with any annotated hallucination or object-related span.

\paragraph{Computing $\Delta\mathrm{NLL}_t$.}
For each schedule, we obtain $\mathrm{NLL}_t^{\text{method}}$ under teacher forcing and compute
\begin{equation}
\Delta\mathrm{NLL}_t
=
\mathrm{NLL}_t^{\text{method}}-\mathrm{NLL}_t^{\text{vanilla}},
\end{equation}
as in Eq.~\eqref{eq:delta_nll}.
Positive $\Delta\mathrm{NLL}_t$ indicates suppression of the reference token $y_t$ (lower probability),
while negative values indicate promotion.

\section{Implementation Details}
\label{app:exp_details}
\subsection{Implementation and hardware.}
All experiments are implemented in PyTorch with the Hugging Face transformers ecosystem.
We run training and evaluation on a single NVIDIA A100 GPU.
We load the base LVLMs checkpoints from their corresponding Hugging Face repositories and follow the official tokenizer and image preprocessing pipelines provided with each model.
Our code and configuration files will be released to facilitate reproducibility.

\subsection{Hyperparameter settings}
\label{app:hyperparams}

We report TLVS hyperparameters for LLaVA-1.5-7B and Qwen2.5-VL-7B; unless noted, the protocol is shared across models.

\paragraph{Step 1 (pseudo-labeling).}
We set $q_{\mathrm{tail}}=0.95$ on $|\Delta_t|$ (Eq.~\ref{eq:delta}), taking top/bottom $5\%$ as $\mathcal{T}^+/\mathcal{T}^-$.
Induced thresholds: Qwen2.5-VL-7B $(|\Delta_t|\ge 3.0,\ |\Delta_t|\le 0.125)$; LLaVA-1.5-7B $(|\Delta_t|\ge 2.3,\ |\Delta_t|\le 0.435)$.

\paragraph{Step 2 (vector refinement).}
We optimize only $W^{(\ell)}$ on layers $\mathcal{S}$ (model-specific) with Eq.~\ref{eq:refine_obj} using $\beta=0.05$ and $\gamma=0.10$.
AdamW: lr $5\times10^{-4}$, wd $0$, clip $1.0$, grad-accum $8$, $1500$ micro-steps.

\paragraph{Step 3 (adaptive steering).}
We compute $\mathrm{VS}_t$ with $\tau_{\mathrm{KL}}=1.0$ (Eq.~\ref{eq:vs_kl}), normalize with $(\mu_{\mathrm{VS}},\sigma_{\mathrm{VS}})=(0.0913,0.293)$ (Eq.~\ref{eq:vs_norm}),
and gate with $(b,s)=(2.25,1.0)$ (Eq.~\ref{eq:gate}).
We set $(\lambda_{\min},\lambda_{\max})=(0,1.5)$ (Eq.~\ref{eq:lam_tilde}) and disable smoothing by default ($\alpha=0$ in Eq.~\ref{eq:smooth}).

\paragraph{Per-step confidence cap.}
To avoid overly aggressive interventions at steps where the model is already confident, we optionally impose a per-step cap $\lambda_t^{\mathrm{cap}}$ based on a lightweight confidence proxy derived from the image-conditioned route.
By default we use a margin-based proxy and set the cap mode to \texttt{margin}, with cap upper bound $\lambda_{\mathrm{cap}}=2.0$.
This cap is applied after the gating computation and we take $\lambda_t=\min(\hat{\lambda}_t,\lambda_t^{\mathrm{cap}})$ as in Eq.~\ref{eq:cap}.
All cap-related calibration statistics and implementation constants are reported in our released configuration for exact reproducibility.

\paragraph{Layer choices.}
We steer and refine all transformer layers except the first and the last. For Qwen2.5-VL-7B we use layers 1--26. For LLaVA-1.5-7B we use layers 1--30.

\section{Per-step confidence cap for adaptive steering}
\label{app:cap}

We further constrain the stepwise steering strength by imposing a per-step cap $\lambda_t^{\mathrm{cap}}$ computed from a confidence proxy of the image-conditioned logits $z_t^I$. The cap is designed to prevent overly aggressive interventions at steps where the model is already confident, thereby improving decoding stability.

\paragraph{Image-conditioned predictive distribution and entropy.}
At decoding step $t$, let $p_t^I$ denote the next-token distribution under the image-conditioned route:
\begin{equation}
p_t^I
=
\mathrm{softmax}\!\left(\frac{z_t^I}{T_{KL}}\right),
\end{equation}
where $T_{KL}$ is the temperature used consistently with the visual-sensitivity computation in Eq.~\eqref{eq:vs_kl}.
We measure confidence using the (Shannon) entropy of $p_t^I$:
\begin{equation}
H_t
=
-\sum_{v \in \mathcal{V}} p_t^I(v)\,\log p_t^I(v),
\end{equation}
where $\mathcal{V}$ is the vocabulary. Lower entropy indicates higher confidence.

\paragraph{Entropy normalization.}
To make entropy comparable across steps and samples, we standardize it using calibration statistics computed on a held-out calibration set:
\begin{equation}
\bar{H}_t
=
\frac{H_t - \mu_H}{\sigma_H + \epsilon},
\end{equation}
We use the same $\epsilon$ as in Eq.~\eqref{eq:vs_norm}, a small constant for numerical stability (e.g., $\epsilon=10^{-8}$), where $(\mu_H,\sigma_H)$ denote the mean and standard deviation of entropy values pooled over calibration tokens.

\paragraph{Mapping entropy to a steering cap.}
We map the normalized entropy $\bar{H}_t$ to a cap in $[\lambda_{\min},\lambda_{\max}]$ via a sigmoid:
\begin{equation}
c_t
=
\operatorname{sigmoid}\!\left(\frac{\bar{H}_t - b_H}{s_H}\right),
\end{equation}
and define the per-step cap as
\begin{equation}
\lambda_t^{\mathrm{cap}}
=
\lambda_{\min} + (\lambda_{\max}-\lambda_{\min})\,c_t.
\label{eq:cap_entropy}
\end{equation}
Intuitively, when the model is more uncertain (higher entropy), $c_t$ increases and the cap permits stronger steering; when the model is confident (lower entropy), the cap becomes smaller and prevents unnecessary perturbations.

\paragraph{Final capped strength.}
During decoding, we apply the cap after optional smoothing (Eq.~\eqref{eq:smooth}) and take
\begin{equation}
\lambda_t = \min\!\big(\hat{\lambda}_t,\,\lambda_t^{\mathrm{cap}}\big),
\end{equation}
as in Eq.~\eqref{eq:cap}.

\section{Qualitative comparison examples}
\label{app:qual_examples}

We provide qualitative examples comparing base model and the model with our Token-Level Visual-Sensitivity Steering (TLVS). Each example is given a general prompt such as ``Describe this image''. Overall, TLVS suppresses ungrounded object mentions while preserving visually supported content, consistent with its step-adaptive budget allocation.

\begin{figure}[t]
  \centering
  \includegraphics[width=\linewidth,height=0.78\textheight,keepaspectratio]{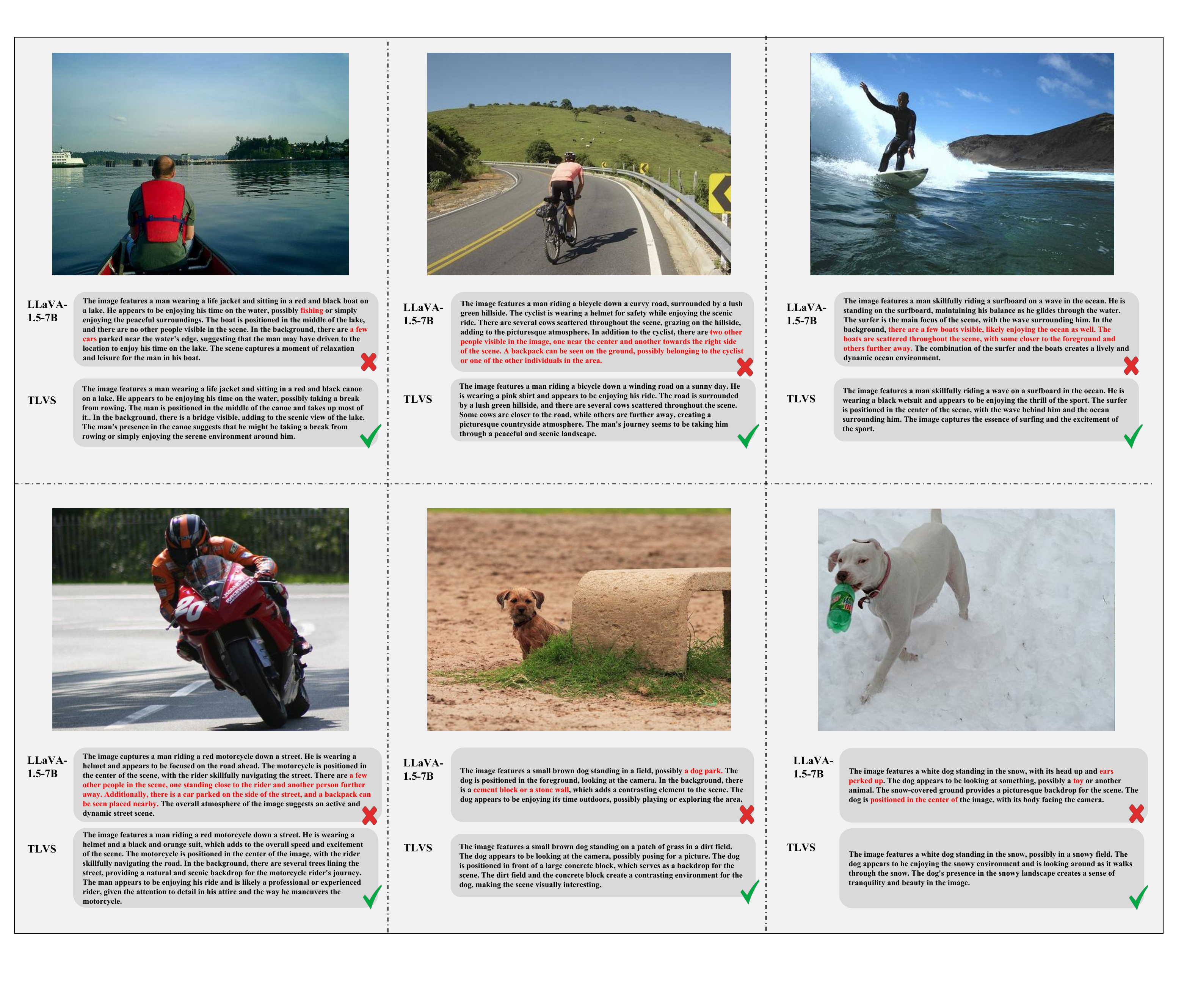}
  \caption{
  \textbf{Qualitative comparison before vs.\ after TLVS.}
  We compare vanilla decoding (without steering) against TLVS on four representative cases.
  TLVS reduces hallucinated object mentions and improves visual grounding, while avoiding broad perturbations on image-insensitive tokens due to step-adaptive steering.
  }
  \label{fig:tlvs_qual_examples}
\end{figure}

\section{Data Requirement for Learning Token-Level Visual-Sensitivity Patterns}
\label{app:data_requirement}

The steering vectors in TLVS are not intended to model the case-level distribution of RLHF-V. Instead, RLHF-V image-question pairs are used to capture token-level activation patterns that distinguish visually sensitive tokens from visually insensitive tokens. Specifically, for each answer token, we compute the visual-sensitivity score
\begin{equation}
\Delta_t
=
\log p_\theta(y_t \mid x, I, y_{<t})
-
\log p_\theta(y_t \mid x, \varnothing, y_{<t}),
\end{equation}
where the two probabilities are computed under teacher forcing with and without the image input. Based on \(|\Delta_t|\), we construct pseudo-labeled token groups \(\mathcal{T}^{+}\) and \(\mathcal{T}^{-}\), corresponding to visually sensitive and visually insensitive tokens, respectively.

To examine how much data is required to learn this token-level pattern, we conduct a pseudo-label classifier scaling study on RLHF-V subsets with different numbers of samples. The classifier is trained to distinguish visually sensitive tokens from visually insensitive tokens, and is evaluated on a fixed 500-sample subset. As shown in Table~\ref{tab:pseudo_label_scaling}, even a small number of samples is sufficient to learn the token-level ``visual vs. non-visual'' pattern. Increasing the subset size further improves the classification performance, mainly by improving stability rather than changing the learned pattern.

\begin{table}[t]
\centering
\caption{Pseudo-label classifier scaling study on RLHF-V subsets. Even small subsets are sufficient to capture the token-level visual-sensitivity pattern.}
\label{tab:pseudo_label_scaling}
\scriptsize
\setlength{\tabcolsep}{7pt}
\renewcommand{\arraystretch}{1.05}
\begin{tabular}{cccc}
\toprule
\(N\) & Micro-F1 & Micro-Bal-Acc & AUROC \\
\midrule
50  & 0.8244 & 0.8792 & 0.9585 \\
100 & 0.8475 & 0.9004 & 0.9672 \\
150 & 0.8518 & 0.9076 & 0.9699 \\
200 & 0.8605 & 0.9141 & 0.9734 \\
250 & 0.8675 & 0.9186 & 0.9762 \\
300 & 0.8721 & 0.9226 & 0.9791 \\
350 & 0.8811 & 0.9288 & 0.9813 \\
400 & 0.8887 & 0.9331 & 0.9827 \\
450 & 0.8946 & 0.9372 & 0.9849 \\
500 & 0.9008 & 0.9410 & 0.9863 \\
\bottomrule
\end{tabular}
\end{table}

These results indicate that the token-level visual-sensitivity pattern can be learned from a relatively small calibration subset. This supports our design choice that the refinement stage is primarily used to denoise and stabilize the steering vectors, rather than to memorize the case-level distribution of RLHF-V.

\end{document}